\documentclass[11pt]{article}
\usepackage{hyphenat}
\usepackage[in]{fullpage}
\usepackage{multirow}
\usepackage{amsthm}
\newcommand{\algc}[1]{\hfill\(\triangleright\) {\footnotesize #1}}
\usepackage{times}
\usepackage{helvet}
\usepackage{courier}
\usepackage{graphicx}
\usepackage[cmex10]{amsmath}
\usepackage[mathscr]{euscript}
\usepackage{bm}
\usepackage{amsfonts}
\usepackage{tikz}
\usepackage[mathscr]{euscript}
\usepackage{color}
\usepackage[dvipsnames]{xcolor}
\usepackage{hyperref}
\usepackage{cleveref}
\crefname{algorithm}{Algorithm}{Algorithms}
\Crefname{algorithm}{Algorithm}{Algorithms}
\crefname{assumption}{Assumption}{Assumptions}
\Crefname{assumption}{Assumption}{Assumptions}
\usepackage{nicefrac}       
\usepackage{microtype}      
\usepackage{xcolor}
\usepackage[most]{tcolorbox}
\usepackage{wrapfig}

\definecolor{bestcolor}{named}{Green}

\usepackage{capt-of}

\newenvironment{squishenumerate}
{\begin{list}{\arabic{enumi}.}{%
    \usecounter{enumi}%
    \setlength{\itemsep}{2pt}%
    \setlength{\parsep}{1pt}%
    \setlength{\topsep}{3pt}%
    \setlength{\parskip}{0pt}%
    \setlength{\labelwidth}{.5in}%
    \setlength{\labelsep}{0.05in}%
    \setlength{\leftmargin}{.2in}}}
{\end{list}}

\usepackage{amsmath,mleftright}
\hypersetup{
    pdffitwindow=true,
    pdfstartview={FitH},
    pdfnewwindow=true,
    colorlinks,
    linktocpage=true,urlcolor=Green,
    linkcolor=Red,
    citecolor=Blue
}

\usepackage{bm}
\usepackage[american]{babel}
\usepackage{xr}

\usepackage[utf8]{inputenc} 
\usepackage[T1]{fontenc}    
\usepackage{url}            
\usepackage{booktabs}       
\usepackage{amsfonts}       
\usepackage{nicefrac}       
\usepackage{microtype}      
\usepackage{float}

\usepackage{parskip} 

\usepackage{colonequals}
\usepackage{textcomp}
\usepackage{amsopn,float,bbm,bm,enumerate,color,multirow,gensymb}
\usepackage{amsfonts,amsmath,amssymb,amsthm} 
\usepackage{array}
\usepackage{url}            
\usepackage{booktabs}       
\usepackage{nicefrac}       
\usepackage{microtype}      
\usepackage{bm}

\usepackage{natbib}
\usepackage{graphicx}
\usepackage{subfigure}

\usepackage{xspace}
\usepackage{bigints}
\usepackage[utf8]{inputenc} 
\usepackage[T1]{fontenc}    

\usepackage{epsfig,subfigure,graphicx}
\usepackage{comment}

\usepackage{afterpage}
\usepackage{thmtools,thm-restate}

\usepackage{enumitem}

\usepackage{amsthm}

\newtheorem{nad}{Notation and Definitions}[section]

\newtheorem{theorem}{Theorem}[section]

\usepackage{algorithm} 
\usepackage{algpseudocode}

\usepackage[]{color-edits}
\addauthor[]{rd}{violet}
\addauthor[]{ug}{teal}
\addauthor[]{ks}{Green}

\usepackage{url}
\usepackage{cleveref}

\title{Inference Time Policy Optimization for\\ Offline RL with Differentiable World Models}

\author{\normalsize 
\makebox[\textwidth][c]{%
\begin{tabular}{c@{\hspace{2cm}}c}
\textbf{Rohan Deb} & \textbf{Steven J. Wright} \\
Siebel School of Computing and Data Science & Department of Computer Sciences\\
University of Illinois Urbana-Champaign & University of Wisconsin Madison \\
{\texttt{rd22@illinois.edu}} &
{\texttt{sjwright2@wisc.edu}}
\end{tabular}
}
\\[1.2em]
\makebox[\textwidth][c]{%
\begin{tabular}{c}
\textbf{Arindam Banerjee} \\
Siebel School of Computing and Data Science \\
University of Illinois Urbana-Champaign \\
{\texttt{arindamb@illinois.edu}}
\end{tabular}
}
}

\date{}
\usepackage[most]{tcolorbox}

\newcommand{\beq}{\begin{equation}}
\newcommand{\eeq}{\end{equation}}

\DeclareMathOperator{\argmax}{argmax}

\newcommand {\commentout}[1] {}

\def\ints{{{\rm Z} \kern -.35em {\rm Z} }}  
\def\smallints{{{\rm Z} \kern -.3em {\rm Z} }}  
\def\pints{{{\rm I} \kern -.15em {\rm N} }}      
\newcommand{\reals}{\mathbb R}

\def\cplx{{{\rm I} \kern -.45em {\rm C} }}       
\def\l2{\rm {\mathcal L}^{2}(\reals)}            

\newcommand{\be}{\begin{eqnarray}}
\newcommand{\ee}{\end{eqnarray}}
\newcommand{\bea}{\begin{eqnarray}}
\newcommand{\eea}{\end{eqnarray}}
\newcommand{\beaa}{\begin{eqnarray*}}
\newcommand{\eeaa}{\end{eqnarray*}}
\newcommand{\bnad}{\begin{nad}}
\newcommand{\enad}{\end{nad}}

\newcommand{\IGNORE}[1]{}

﻿

{\begin{array}{l@{\hspace{8mm}}l@{\hspace{8mm}}l}}%
{\end{array}}
\newlength{\lplb}
\DeclareMathSymbol{\mhyphen}{\mathord}{AMSa}{"39}

\usepackage{thmtools,thm-restate}

\theoremstyle{definition}

\definecolor{darkgreen}{rgb}{0.05, 0.5, 0.25}

\newcommand{\DiffMPC}{\color{Green}\ensuremath{\mathtt{DiffMPC}}}
\newcommand{\FlowMPC}{\color{Green}\ensuremath{\mathtt{FlowMPC}}}

\newenvironment{greenboxheadless}[1][]{
    \refstepcounter{innerdef}%
    \begin{tcolorbox}[
        enhanced=false,
        breakable=false,
        title=,
        notitle,
        left=5pt,
        right=5pt,
        top=5pt,
        bottom=5pt,
        colback=darkgreen!1,
        colframe=darkgreen!150,
        boxrule=1pt,
        arc=2pt,
        outer arc=2pt
    ]
}{
    \end{tcolorbox}
}
\begin{document}

\maketitle

\begin{abstract}

Offline Reinforcement Learning (RL) learns optimal policies from fixed datasets, training a policy once and deploying it at inference time without further refinement. Inspired by model predictive control (MPC), we introduce an inference time adaptation framework  that utilizes a pretrained policy along with a learned world model. 
While existing world model and diffusion-planning methods use learned dynamics to generate imagined trajectories during training, or to sample candidate plans at inference time, they do not use inference-time information to \emph{optimize} the policy parameters on the fly.
In contrast, our design is a {Differentiable World Model} (DWM) pipeline that enables end-to-end gradient computation through imagined rollouts for {\em inference time policy optimization} (ITPO). 
We evaluate our algorithm on D4RL continuous-control benchmarks (MuJoCo locomotion tasks and AntMaze), and show that exploiting inference-time information to optimize the policy parameters yields consistent gains over strong offline RL baselines. Inference-time adaptation, however, is expensive: rollout generation and backpropagation dominate per-step compute. We study this tradeoff explicitly, showing that a suitable tilted version of one-step MeanFlow sampler recovers much of the gains at a fraction of the cost.


\end{abstract}

\section{Introduction}
\label{sec:introduction}
\vspace{-0.5ex}
Offline reinforcement learning learns a policy in a Markov decision process from a fixed dataset of previously collected trajectories, without any additional interaction with the environment \citep{levine2020offline,prudencio2022offline,ernst2005tree,fujimoto2019offpolicy}. 
This setting arises naturally in domains where data is abundant but online interaction is constrained \citep{levine2020offline,fujimoto2019offpolicy,liu2020provably}. 
Examples include modern recommender and advertising systems, which operate on massive logs of user interactions \citep{xiao2023interactive,afsar2021rlrs,cai2017rtb,zhao2021dear}; healthcare and other safety-critical decision problems, which cannot permit exploratory actions \citep{gottesman2019guidelines,yu2021healthcare,fatemi2022semimarkov}; robotics and control, which may be limited by real-world cost, wear, and safety \citep{kober2013robotics,zhou2022realworld,venkataraman2024realworld}; and autonomous driving and navigation, which rely typically on recorded datasets rather than unrestricted on-policy exploration \citep{kiran2022driving,shi2021autonomous,shah2023revind}.

The core challenge of offline RL is to optimize reward while avoiding actions that are unsupported by the data distribution \citep{lange2012batch,levine2020offline}. Offline RL algorithms address this challenge by shaping either the policy or the value-learning objective. Behavior Regularized Actor Critic (BRAC) approaches keep the learned policy close to the dataset behavior \citep{fujimoto2021td3bc,tarasov2023rebrac}; Conservative Q Learning (CQL) penalizes high Q-values on unlikely actions to reduce overestimation under dataset shift \citep{kumar2020cql}; and in-sample or implicit methods avoid querying out-of-distribution actions (e.g., IQL) \citep{kostrikov2022iql}. 
Separately, motivated by generative models' success, recent work replaces parametric policies with generative ones to capture multi-modal behavior. These include sequence-modeling policies such as Decision Transformer that predict actions conditioned on return \citep{chen2021decisiontransformer} and diffusion or flow-based policies such Decision Diffuser \citep{janner2022diffuser} and Flow Q learning \citep{park2025flow} that generate actions or trajectories by iterative denoising or flow dynamics. For a more detailed discussion on related works, see Appendix \ref{sec:related}.


Existing offline RL methods typically produce a single policy trained on the offline dataset and then deploy it \emph{as-is} at inference time. These methods do not explicitly exploit inference-time information about the particular states encountered by the agent beyond using it as the input to the pre-trained policy. 
Recent work on large language models and reasoning suggests that additional inference-time computation can improve performance on a given test instance. For example, self-consistency and search-based reasoning methods use extra computation at inference time to refine or select among candidate outputs \citep{wang2022selfconsistency,yao2023tree,muennighoff2025s1}, while more recent work has begun to explore reinforcement-learning-based approaches for inference-time improvement \citep{huang2025t1,li2025ttrl}. This motivates the question: whether, in offline RL, additional inference-time computation can similarly improve the performance of an agent.


In offline RL, realizing this idea requires identifying what form of inference-time computation can actually improve the deployed policy. An offline learner returns a pre-trained policy $\pi_\psi$ and an associated critic $Q_\phi$, intended to approximate the optimal policy and $Q$-values. When the policy is deployed without further adaptation, performance depends on how accurate these approximations are in the states and actions \emph{encountered at inference time}, and accurate $Q$ estimation is itself challenging for two reasons. First, the $Q$-value is intrinsically difficult to estimate because it represents a long-horizon quantity: it aggregates future rewards over many steps starting from a given state-action pair. Second, in offline RL the critic must approximate the $Q$-value of the \emph{optimal} policy, even though the available data might be generated by a different behavior policy. Beyond the difficulty of $Q$-estimation, the deployed policy is itself information-limited: it is the best policy extractable from the offline dataset under the chosen learning objective. Inference-time gains must therefore come from \emph{new information} that was not available during offline training: the particular state $s_t$ that the agent encounters at deployment. The information becomes useful only if it can be coupled with a model of the consequences of acting from $s_t$, allowing the agent to look ahead from this specific state and refine its decision based on what would happen next.

In the Markovian model, the one-step transition dynamics $P(s' \mid s,a)$ and reward function $r(s,a)$ are \textit{local objects}: they depend only on the current state-action pair, rather than on long-horizon estimates, and are independent of whatever policy is being deployed.  This fact motivates a different approach: learn world models of $P$ and $r$ from offline data and use these learned functions at inference time to formulate policy. The link between $P$ and $r$ and a corresponding optimal policy is provided by Model Predictive Control (MPC) \citep{CamachoBordons2007MPC,RawlingsMayneDiehl2017MPC}. 
Conventional MPC optimizes action sequences over rolled-out short-horizon trajectories defined by the function $P$, thus implicitly generating a policy.
We use MPC in a different way, using information gathered from the rollout simulations to update the parameters that define our policy, and not to merely find the best action sequence.
In this way, the initial policy learned from the offline data is updated continuously during deployment as new information is gathered about the system. See Figure~\ref{fig:mpc_dwm_flow} for a visual overview and Section~\ref{sec:MPCwDWM} for a detailed description of our method. We now outline our main contributions.
\begin{greenboxheadless}
\begin{squishenumerate}
    \item We introduce \emph{MPC with a Differentiable World Model} (MPCwDWM), an inference-time adaptation framework for offline RL. The world model consists of a learned generative state-transition sampler, a reward model, and a terminal-value function, all differentiable in the policy parameters (Section~\ref{sec:worldModel}).
    \vspace{0.9ex}
    \item Building on this pipeline, we perform inference-time \emph{policy-parameter} updates by backpropagating through imagined rollouts to refine the baseline policy before each action. This contrasts with prior MPC and gradient-based-planning methods that optimize action sequences while leaving the policy fixed (Section~\ref{sec:MPCwDWM}).
    \vspace{0.9ex}
    \item We evaluate on D4RL continuous-control benchmarks~\citep{fu2020d4rl} (18 MuJoCo locomotion and 6 AntMaze datasets), showing consistent gains over strong offline RL baselines (Section~\ref{sec:exp}).
    \vspace{0.9ex}
    \item To address the compute requirements of diffusion unrolling and BPTT, we propose \emph{FlowMPC}, a one-step MeanFlow variant of the transition sampler that recovers much of the gain at a fraction of the compute (Section~\ref{sec:flowmpc}).

\end{squishenumerate}
\end{greenboxheadless}

\textbf{Comparison with existing World Model based Offline RL methods.} Although world models (including diffusion-based world models) have been used in \emph{offline} RL, they are typically leveraged in two ways. Some methods \citep{kidambi2020morel,yu2020mopo,yu2021combo,ding2024dwm} use the learned dynamics primarily during \emph{training}, generating imagined rollouts from the offline dataset to construct additional targets or synthetic experience for policy and value learning, including diffusion world models that model multi-step futures without step-by-step rollout. Other methods use a generative model at \emph{inference time} to produce candidates for future {optimal state trajectories} via sampling (often with guidance via return-conditioning), and then select an action by executing the first action of a sampled plan, without adapting the underlying policy parameters during inference \citep{ajay2023conditionalgen,janner2022diffuser,ki2025priorguided,yun2024gtg}. In contrast, our method uses the current observed state to \emph{adapt the policy parameters at inference time} by backpropagating through a differentiable world model over imagined finite-horizon rollouts.

\textbf{Comparison with MPC-based methods.} Although MPC has been combined with learned dynamics in several recent works, these methods either operate online with environment interaction or perform gradient-free planning over action sequences. Latent-world-model methods such as Dreamer \citep{hafner2020dreamer,hafner2021dreamerv2, hafner2025dreamerv3} and TD-MPC \citep{hansen2022tdmpc,hansen2024tdmpc2} train jointly with online interaction; the former uses the model only at training time, while the latter performs gradient-free MPPI in the latent space at deployment. Gradient-based and embedded differentiable MPC methods \citep{jyothir2023gradient,romero2025acmpc} optimize action sequences or cost-function parameters rather than the policy itself. Diffusion-based MPC methods such as D-MPC \citep{zhou2024dmpc} sample candidate action sequences from a learned diffusion proposal and select the best by gradient-free random shooting under a multi-step diffusion dynamics model. In contrast, our method operates purely on offline data and uses the current observed state to \emph{adapt the policy parameters at inference time} by backpropagating through a differentiable world model over imagined finite-horizon rollouts. A detailed methodological comparison against each of these is provided in Appendix~\ref{sec:mpc_baselines}.

\section{Preliminaries}

We consider a discounted Markov decision process (MDP) specified by the tuple
$\mathcal{M}=\langle \mathcal{S},\mathcal{A},P,d_0,r,\gamma\rangle$,
where $\mathcal{S}$ and $\mathcal{A}$ are the state and action spaces, $P(s'\mid s,a)$ is the transition kernel, $d_0$ is the initial state distribution, $r(s,a)$ is the reward function, and $\gamma\in[0,1)$ is the discount factor.
A stochastic policy $\pi(a\mid s)$ induces a distribution over trajectories $\tau=(s_0,a_0,r_0,s_1,a_1,r_1,\ldots)$ with density
$p^\pi(\tau)=d_0(s_0)\prod_{t\ge 0}\pi(a_t\mid s_t)P(s_{t+1}\mid s_t,a_t)$
and discounted return $R(\tau) \;=\; \sum_{t=0}^{\infty}\gamma^t r(s_t,a_t).$
The RL objective is to find a return-maximizing policy within a policy class $\Pi$,
$\pi^{\star} \in \argmax_{\pi\in\Pi} ~~\mathbb{E}_{\tau\sim p^\pi}\!\left[R(\tau)\right]~.$

\textbf{Temporal-difference learning.}
TD methods approximate the optimal action-value function
$Q^{\star}(s,a)=\mathbb{E}_{\tau\sim p^{\pi^{\star}}}[R(\tau)\mid s_0=s,a_0=a]$
with a parameterized critic $Q_\phi$ by minimizing a Bellman error on the data
$$\mathcal{L}_{\mathrm{TD}}(\phi)
=\!\!\!\!\!\!
\mathop{\mathbb{E}}\limits_{\substack{(s,a,r,s')\sim \mathcal{D}}}
\Big[
\big(r + \gamma \max_{a'\in\mathcal{A}} Q_{\bar\phi}(s',a') - Q_\phi(s,a)\big)^2
\Big]~,$$
where $Q_{\bar\phi}$ denotes a target network whose parameters $\bar\phi$ are a slowly updated copy of $\phi$.
For continuous action spaces, the maximization is typically implemented via a parameterized actor
$\pi_\psi(a\mid s)$, leading to a policy objective of the form
$\mathcal{J}(\psi)
\;:=\;
\mathbb{E}_{s\sim \mathcal{D},\,a\sim \pi_\psi(\cdot\mid s)}\!\left[Q_\phi(s,a)\right]~.$

\textbf{Offline RL and distribution shift.}
Offline RL learns a policy from a fixed dataset without additional environment interactions.
We assume access to a static dataset of transitions
$\mathcal{D}=\{(s_t,a_t,r_t,s_{t+1})\}_{t=1}^{N}$ collected by an unknown behavior policy $\mu$.
A core difficulty is that naively applying TD learning and policy improvement can drive the learned policy
$\pi_\psi$ toward state-action regions that are weakly represented in $\mathcal{D}$, i.e., the induced
discounted state-action occupancy measure $d^{\pi_\psi}$ moves away from $d^{\mu}$, where
$d^{\pi}(s,a)\;:=\;(1-\gamma)\sum_{t\ge 0}\gamma^t \Pr_{\tau\sim p^{\pi}}\!(s_t=s,\,a_t=a)$.
Many offline RL methods address this distribution shift by enforcing an explicit constraint,
either at the policy level
$D\!\left(\pi_\psi(\cdot\mid s)\,\middle\|\,\mu(\cdot\mid s)\right)\ \le\ \varepsilon,$
or at the occupancy level
$D\!\left(d^{\pi_\psi}\,\middle\|\,d^{\mu}\right)\ \le\ \varepsilon,$
where $D$ is a divergence or discrepancy measure.
Such constrained formulations often introduce additional algorithmic heuristics to obtain stable and competitive performance in practice.

\label{sec:prelim}

\section{Components of the World Model}
\label{sec:worldModel}


Our algorithm at inference time requires sampling next state samples conditioned on the current state action pair, and differentiating those samples with respect to the conditioning variables. We refer to any model with this sample plus gradient interface as a differentiable generative world model (DiffGenWM), and in this work we implement it using a conditional generative model. In addition to the DiffGenWM module, our world model includes a differentiable reward model and a terminal value function given by the critic of a pretrained policy. Concretely, we build a world model using the offline dataset of trajectories $\mathcal{D} = \{(s_t,a_t,r_t,s_{t+1})\}_{t=1}^N$, consisting of (1) a \emph{differentiable diffusion sampler} $f_{\theta}$ to simulate the transition dynamics, (2) a reward model $r_{\xi}$ to learn the reward of a given state action pair and, (3) pretrained policy $\pi_{\psi}$ and the corresponding pretrained critic value $Q_{\phi}$.

\subsection{Differentiable Diffusion Sampler}

We learn a parametric, differentiable sampler
$s_{t+1} = f_{\theta}(s_t, a_t, \varepsilon_t), \;\varepsilon_t \sim p_0,$
that represents a conditional distribution $p_{\theta}(s_{t+1}\mid s_t,a_t)$ over next states, with all stochasticity isolated in $\varepsilon_t$ via reparameterization. Here, \emph{differentiable} means in $(s_t,a_t)$ for fixed $\varepsilon_t$, not in $\theta$. We train $f_\theta$ on the offline dataset $\mathcal{D}=\{(s_t,a_t,r_t,s_{t+1})\}_{t=1}^N$ as a conditional diffusion model \citep{ho2020denoising}.

\textbf{Forward process.} Given a horizon $K$ and a schedule $\{\alpha_k\}_{k=1}^K\subset(0,1)$ with $\bar\alpha_k:=\prod_{j=1}^k \alpha_j$, we corrupt $s_{t+1}$ via a Gaussian Markov chain whose closed-form marginal at level $k$ is
$s_{t+1}^{(k)} = \sqrt{\bar\alpha_k}\, s_{t+1} + \sqrt{1-\bar\alpha_k}\,\epsilon$, where $\epsilon\sim\mathcal{N}(0,I)$.

\textbf{Reverse process.} Conditioned on $c_t:=(s_t,a_t)$, we initialize $s_{t+1}^{(K)} = z_K$ with $z_K\sim\mathcal{N}(0,I)$ and iteratively denoise:
\begin{align*}
s_{t+1}^{(k-1)} = \tfrac{1}{\sqrt{\alpha_k}}\!\left(s_{t+1}^{(k)} - \tfrac{1-\alpha_k}{\sqrt{1-\bar\alpha_k}}\,\hat\epsilon_{\theta}(s_{t+1}^{(k)},k,c_t)\right) + \sigma_k\, z_{k-1}, \quad z_{k-1}\sim\mathcal{N}(0,I),
\end{align*}
where $\hat\epsilon_\theta$ is a learned noise predictor and $\sigma_k^2 = 1-\alpha_k$. Collecting the initial draw and the reverse-step Gaussians into $\varepsilon_t:=(z_K,z_{K-1},\dots,z_0)$, with $p_0$ the product of $K+1$ standard Gaussians, the sampler $(s_t,a_t)\mapsto s_{t+1}^{(0)}$ is deterministic for fixed $\varepsilon_t$ and is therefore differentiable in $(s_t,a_t)$.

\textbf{Training objective.} We minimize the standard conditional denoising loss
$$\mathcal{L}_{\mathrm{diff}}(\theta) = \mathbb{E}_{(s_t,a_t,s_{t+1})\sim\mathcal{D},\,k\sim\mathrm{Unif}\{1,\ldots,K\},\,\epsilon\sim\mathcal{N}(0,I)}\!\left[\|\epsilon - \hat\epsilon_\theta(s_{t+1}^{(k)},k,c_t)\|_2^2\right],$$ with $s_{t+1}^{(k)}$ given by the forward marginal above. The trained $f_\theta$ serves as a learned, differentiable one-step dynamics simulator; see Appendix~\ref{sec:dds} for the full construction, including the explicit initialization and reverse-step maps.

\subsection{Reward model}
In addition to the transition model, we learn a parametric reward predictor from the same offline data. We parameterize the reward predictor as a function
$r_{\xi} : \mathcal{S}\times\mathcal{A}\to\mathbb{R}$,
which maps a state--action pair $(s_t,a_t)$ to a scalar reward estimate. Given an offline dataset of transitions $\mathcal{D}=\{(s_t,a_t,r_t,s_{t+1})\}_{t=1}^N$, we fit $r_{\xi}$ by supervised regression on observed rewards, minimizing the loss function
$$\mathcal{L}_{r}(\xi)
:= \underset{(s_t,a_t,r_t)\sim\mathcal{D}}{\mathbb{E}}
\Big[
\big(r_{\xi}(s_t,a_t)-r_t\big)^2
\Big].$$
After training, $r_{\xi}$ serves as a differentiable reward oracle that can be queried at arbitrary $(s,a)$ pairs produced by downstream planning or policy optimization, and can be combined with the learned diffusion transition model to form multi-step return objectives.

\subsection{Policy and Terminal Value}

Given an offline dataset $\mathcal{D}=\{(s_t,a_t,r_t,s_{t+1})\}_{t=1}^N$, we learn a parametric stochastic policy $\pi_{\psi}(a\mid s)$ and a parametric action-value function $Q_{\phi}(s,a)$ via a BC-regularized actor-critic, in the spirit of BRAC \citep{tarasov2023minimalist,wu2019brac}. The critic minimizes a one-step temporal-difference loss with target network $Q_{\bar{\phi}}$,
$$\mathcal{L}_{Q}(\phi)
:=
\mathbb{E}_{\substack{(s,a,r,s')\sim \mathcal{D}\\ a'\sim \pi_{\psi}(\cdot\mid s')}}
\Big[
\big(Q_{\phi}(s,a) - r - \gamma Q_{\bar{\phi}}(s',a')\big)^2
\Big].$$
The actor maximizes value under $Q_\phi$ while staying close to the dataset behavior via a behavior-cloning term that increases the likelihood of dataset actions under $\pi_{\psi}$,
$$\mathcal{L}_{\pi}(\psi)
:=
\mathbb{E}_{\substack{(s,a)\sim \mathcal{D}\\ a^{\pi}\sim \pi_{\psi}(\cdot\mid s)}}
\Big[
- Q_{\phi}(s,a^{\pi})
\;-\;
\lambda \log \pi_{\psi}(a\mid s)
\Big],$$
where $\lambda>0$ controls the strength of the regularization. In practice, $Q_{\bar{\phi}}$ is maintained as a slowly updated copy of $Q_{\phi}$ to stabilize the bootstrapped target in $\mathcal{L}_{Q}(\phi)$.

\section{MPC with Differentiable World Model}
\label{sec:MPCwDWM}
\begin{figure}[t]
    \centering
    \caption{Inference-time MPC with a diffusion world model. An offline dataset trains (i) a diffusion sampler $f_\theta$, (ii) a reward model $r_\xi$, and (iii) a policy $\pi_\psi$ and terminal critic $Q_\phi$. At state $s_t$, we unroll $M$ imagined rollouts via $\tilde a_h = \pi_\psi(\tilde s_h)$ and $\tilde s_{h+1} = f_\theta(\tilde s_h, \tilde a_h, \varepsilon_{t+h})$, score them with predicted rewards plus a terminal value, and backpropagate through the rollout to update $\psi$ before executing the first action. {\color{Green} Green: forward rollout}; {\color{Brown}Brown: gradient flow}.}
    \includegraphics[width=0.95\linewidth]{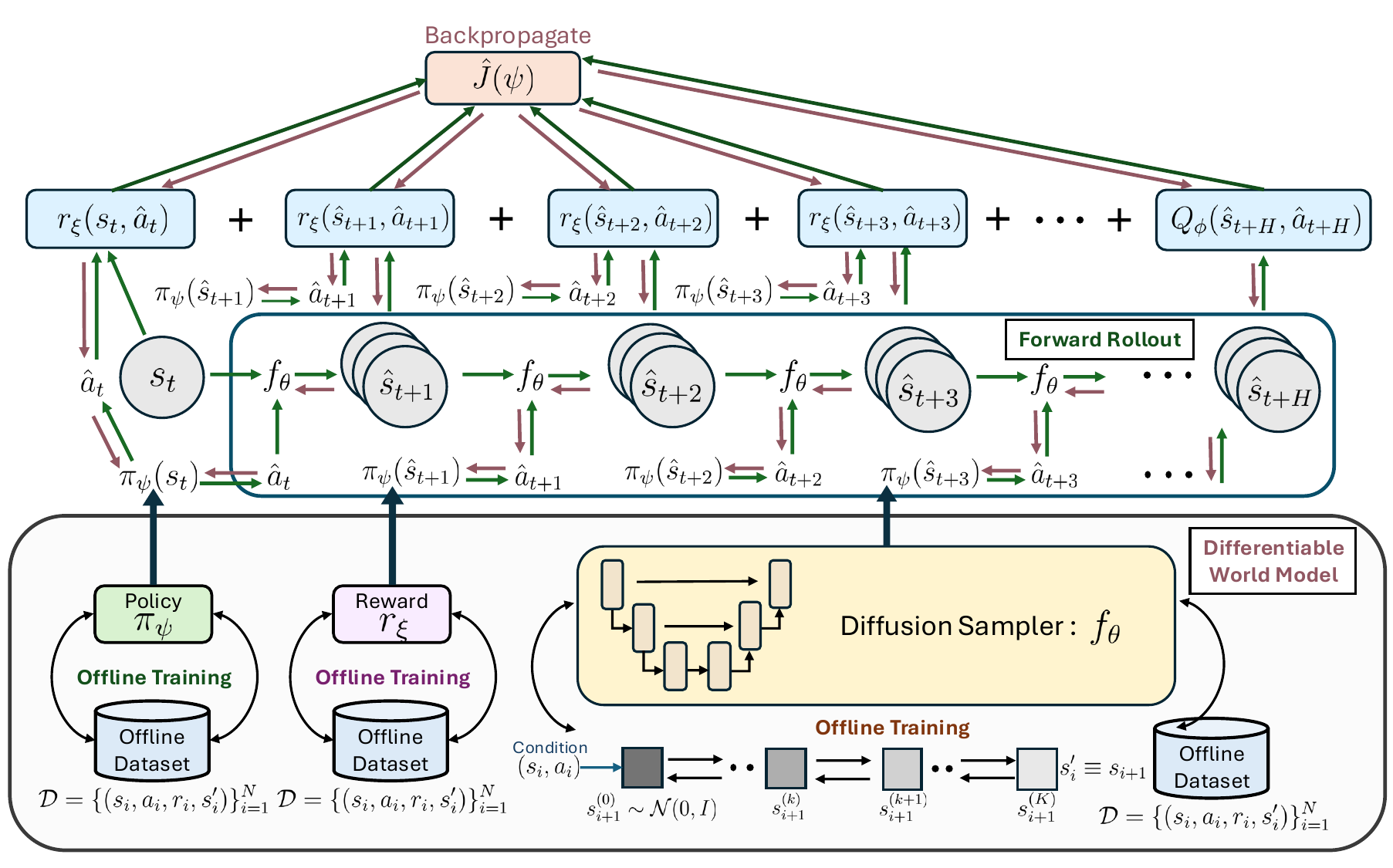}
\label{fig:mpc_dwm_flow}
\end{figure}

We now describe our receding-horizon model predictive control (MPC) procedure that uses the learned diffusion world model as a differentiable simulator. The world model is given by the reparameterized transition map $s_{t+1} = f_{\theta}(s_t,a_t,\varepsilon_t), \varepsilon_t \sim p_0(\varepsilon)$, where $f_{\theta}$ denotes the reverse diffusion sampler unrolled as a computation graph. We combine this dynamics model with a learned reward predictor $r_{\xi}(s,a)$ and a learned terminal critic $Q_{\phi}(s,a)$. The critic provides a terminal value for truncated rollouts, while $r_{\xi}$ supplies the immediate reward along imagined trajectories at inference time. Without loss of generality, we assume $\pi_\psi$ is deterministic for the remainder of this section.

\textbf{Receding-horizon Objective:}
At a real environment state $s_t$, MPC optimizes a horizon-$H$ objective over imagined trajectories generated by $f_{\theta}$. Given a fixed noise sequence
$\varepsilon_{t:t+H-1}=(\varepsilon_t,\ldots,\varepsilon_{t+H-1})$
define the imagined rollout recursively by
$\tilde{s}_0 = s_t,
\,\,
\tilde{a}_j = \pi_{\psi}(\tilde{s}_j),
\,\,
\tilde{s}_{j+1} = f_{\theta}(\tilde{s}_j,\tilde{a}_j,\varepsilon_{t+j})$, where $j=0,\ldots,H-1$.
We evaluate the finite-horizon return by summing predicted stage rewards and adding a terminal value given by the critic:
\begin{align*}
L(\psi;\varepsilon_{t:t+H-1})
&=
\sum_{j=0}^{H-1}\gamma^j\,
r_{\xi}(\tilde{s}_j,\tilde{a}_j)
+
\gamma^H\,
Q_{\phi}\!\Big(
\tilde{s}_H,\pi_{\psi}(\tilde{s}_H)
\Big).
\end{align*}
We then define the MPC objective as the expectation of this return over diffusion noise: $$J_t(\psi)
=
\mathbb{E}_{\varepsilon_{t:t+H-1}\sim p_0}
\Big[
L(\psi;\varepsilon_{t:t+H-1})
\Big].$$

\textbf{Monte Carlo Approximation:} We approximate the objective $J_t(\psi)$ with $M$ i.i.d.\ noise sequences
$\{\varepsilon_{t:t+H-1}^{(m)}\}_{m=1}^{M}$, where
$\varepsilon_{t:t+H-1}^{(m)}\sim p_0$. For each $m$, generate an imagined rollout
$\{\tilde{s}^{(m)}_j,\tilde{a}^{(m)}_j\}_{j=0}^{H}$:
\begin{align*}
\tilde{s}_0^{(m)} = s_t, \,
\tilde{a}_j^{(m)} = \pi_{\psi}\!(\tilde{s}_j^{(m)}),\,
\tilde{s}_{j+1}^{(m)} =
f_{\theta}\!(
\tilde{s}_j^{(m)},\tilde{a}_j^{(m)},\varepsilon_{t+j}^{(m)}
).
\end{align*}
The Monte Carlo estimator is
\begin{align*}
\widehat{J}_t(\psi)
&=
\frac{1}{M}\sum_{m=1}^{M}
\Bigg[
\sum_{j=0}^{H-1}\gamma^j\,
r_{\xi}\!\big(\tilde{s}_j^{(m)},\tilde{a}_j^{(m)}\big) +
\gamma^H\,
Q_{\phi}\!\Big(
\tilde{s}_H^{(m)},\pi_{\psi}(\tilde{s}_H^{(m)})
\Big)
\Bigg].
\end{align*}
\textbf{Gradient-based optimization:}
At time $t$, we perform $E$ steps of gradient ascent on $\widehat{J}_t(\psi)$ while holding
$f_{\theta}$, $r_{\xi}$, and $Q_{\phi}$ fixed:
$\psi \leftarrow \psi + \eta\, \nabla_{\psi}\widehat{J}_t(\psi)$,
where $\eta>0$ is the step size. After the inner loop, MPC executes the first action in the real environment $a_t = \pi_{\psi}(s_t)$, observes $(r_t,s_{t+1})$, and repeats the procedure at the next state $s_{t+1}$. See Figure~\ref{fig:mpc_dwm_flow} for a visual overview and Algorithm~\ref{alg:mpc_diffusion_world_model} for detailed steps. For a closed-form gradient recursion for our setup see Appendix~\ref{sec:mpc_backprop_diffusion_critic}.

\begin{algorithm}[!t]
\caption{Model Predictive Control with Differentiable World Model (MPCwDWM)}
\label{alg:mpc_diffusion_world_model}
\begin{algorithmic}[1]
\Require Diffusion sampler $f_{\theta}$, noise $\{\varepsilon_{t:t+H-1}^{(i)}\}_{i=1}^M$ reward model $r_{\xi}$, terminal critic $Q_{\phi}$, policy $\pi_{\psi}$, horizon $H$, particles $M$, inner steps $E$, step size $\eta$, discount $\gamma$.
\For{$t=1,2,3\ldots, T$}
\State Observe environment state $s_t$
\For{$e=1,2,\dots,E$}
    \State $J \gets 0$, \; $\tilde{s}_0^{(i)} \gets s_t \quad \forall\, i \in \{1,2,\dots,M\}$
    \For{$j=0,1,\dots,H-1$}
        \For{$i=1,2,\dots,M$}
            \State $\tilde{a}_j^{(i)} \gets \pi_{\psi} (\tilde{s}_j^{(i)})$ \algc{\color{Green} Action for imagined rollout}
        \EndFor
        \State $J \gets J + \frac{1}{M}\sum_{i=1}^M \gamma^j\, r_{\xi}(\tilde{s}_j^{(i)},\tilde{a}_j^{(i)})$ \label{line:mpc_reward_accum}
        \For{$i=1,2,\dots,M$}
            \State $\tilde{s}_{j+1}^{(i)} \gets f_{\theta}(\tilde{s}_j^{(i)},\tilde{a}_j^{(i)},\varepsilon_{t+j}^{(i)})$ \algc{\color{Green} Next state}
        \EndFor
    \EndFor
    \State $J \gets J + \frac{1}{M}\sum_{i=1}^M \gamma^H\, Q_{\phi}(\tilde{s}_H^{(i)},\pi_{\psi}(\tilde{s}_H^{(i)}))$ \label{line:mpc_terminal}
    \State $\psi \gets \psi + \eta\, \nabla_{\psi} J$ \algc{\color{Green} Policy Update}
    \label{line:mpc_update}
\EndFor
\State Execute $a_t \gets \pi_{\psi}(s_t)$ and observe $(r_t,s_{t+1})$ \label{line:mpc_execute}
\EndFor
\end{algorithmic}
\end{algorithm}

\section{Experiments}
\label{sec:exp}

\textbf{Datasets.}
We evaluate our algorithm on the D4RL benchmark \citep{fu2020d4rl}, focusing on continuous-control domains where offline learning and long-horizon credit assignment are challenging. Our main results cover Gym MuJoCo locomotion datasets (15 tasks spanning medium, expert, medium-expert, medium-replay, and full-replay settings across all three environments HalfCheetah, Hopper, and Walker2d). We also consider all AntMaze datasets (6 tasks spanning maze, maze-diverse, medium-play, medium-diverse, large-play, and large-diverse). 

\textbf{Baselines.}
We compare against strong offline RL baselines that are widely used on D4RL locomotion. These include TD3+BC \citep{fujimoto2021td3bc}, which adds behavior cloning regularization to a TD3 actor update; IQL \citep{kostrikov2022iql}, which performs offline policy improvement via value-weighted regression without explicit behavior constraints; CQL \citep{kumar2020cql}, which trains a conservative critic by penalizing high values on out-of-distribution actions; 
and ReBRAC \citep{tarasov2023rebrac}, which revisits behavior-regularized actor-critic with practical improvements and strong performance. 

\begin{table}[!t]
\centering
\normalsize
\resizebox{\linewidth}{!}{%
\begin{tabular}{l|ccc|cc}
\toprule
Dataset & TD3+BC & IQL & CQL & {ReBRAC} & $\DiffMPC$ \\
\midrule
\multicolumn{6}{c}{\textbf{halfcheetah}} \\
\midrule
medium        & $54.7 \pm 0.9$ & $50.0 \pm 0.2$ & $46.9 \pm 0.4$ & $65.87 \pm 1.3$  & $\textbf{70.05} \pm 1.8$ \\
expert        & $93.4 \pm 0.4$ & $95.5 \pm 2.1$ & $97.3 \pm 1.1$ & $105.22 \pm 0.92$ & $\textbf{106.14} \pm 2.1$ \\
medium-expert & $89.1 \pm 5.6$ & $92.7 \pm 2.8$ & $95.0 \pm 1.4$ & $100.29 \pm 8.2$ & $\textbf{105.35} \pm 1.8$ \\
medium-replay & $45.0 \pm 1.1$ & $42.1 \pm 3.6$ & $45.3 \pm 0.3$ & $50.21 \pm 0.7$  & $\textbf{59.89} \pm 1.2$ \\
full-replay   & $75.0 \pm 2.5$ & $75.0 \pm 0.7$ & $76.9 \pm 0.9$ & $82.19 \pm 0.9$  & $\textbf{85.149} \pm 0.9$ \\
\midrule
\multicolumn{6}{c}{\textbf{hopper}} \\
\midrule
medium        & $60.9 \pm 7.6$  & $65.2 \pm 4.2$ & $61.9 \pm 6.4$ & $102.05 \pm 0.9$ & $\textbf{103.38} \pm 0.35$ \\
expert        & $\textbf{109.6} \pm 3.7$ & $108.8 \pm 3.1$ & $106.5 \pm 9.1$ & $104.73 \pm 6.94$ & $105.06 \pm 5.02$ \\
medium-expert & $87.8 \pm 10.5$ & $85.5 \pm 29.7$ & $96.9 \pm 15.1$ & $105.70 \pm 8.3$ & $\textbf{107.42} \pm 5.30$ \\
medium-replay & $55.1 \pm 31.7$ & $89.6 \pm 13.2$ & $86.3 \pm 7.3$  & $95.82 \pm 7.9$  & $\textbf{103.14} \pm 0.45$ \\
full-replay   & $97.9 \pm 17.5$ & $104.4 \pm 10.8$ & $101.9 \pm 0.6$ & $107.49 \pm 0.4$ & $\textbf{108.77} \pm 0.6434$ \\
\midrule
\multicolumn{6}{c}{\textbf{walker2d}} \\
\midrule
medium        & $77.7 \pm 2.9$  & $80.7 \pm 3.4$  & $79.5 \pm 3.2$  & $84.78 \pm 1.8$  & $\textbf{88.91} \pm 0.6$ \\
expert        & $110.0 \pm 0.6$ & $96.9 \pm 32.3$ & $109.3 \pm 0.1$ & $112.15 \pm 0.2$ & $\textbf{116.65} \pm 0.41$ \\
medium-expert & $110.4 \pm 0.6$ & $112.1 \pm 0.5$ & $109.1 \pm 0.2$ & $111.81 \pm 0.3$ & $\textbf{115.76} \pm 1.04$ \\
medium-replay & $68.0 \pm 19.2$ & $75.4 \pm 9.3$  & $76.8 \pm 10.0$ & $80.67 \pm 5.8$  & $\textbf{95.87} \pm 1.19$ \\
full-replay   & $90.3 \pm 5.4$  & $97.5 \pm 1.4$  & $94.2 \pm 1.9$  & $100.16 \pm 8.8$ & $\textbf{105.76} \pm 2.91$ \\
\midrule
Average & $81.7$ & $84.8$ & $85.6$ & $93.9$ & $\textbf{98.5}$ \\
\bottomrule
\end{tabular}%
}
\vspace{2ex}
\caption{\normalsize Average normalized score over the final evaluation and ten unseen training seeds on Gym-MuJoCo tasks. TD3+BC, IQL, and CQL scores are taken from \citep{tarasov2023minimalist}. We ran ReBRAC and $\DiffMPC$ with ReBRAC as the pre-trained policy. The symbol $\pm$ represents the standard deviation across seeds.
}
\label{tab:rebrac_mujoco}
\end{table}

\begin{wraptable}{r}{0.44\textwidth}
\centering
\renewcommand{\arraystretch}{1.05}
\setlength{\tabcolsep}{2pt}
\begin{tabular*}{\linewidth}{@{\extracolsep{\fill}}l|cc@{}}
\toprule
Task Name & ReBRAC & $\DiffMPC$ \\
\midrule
antmaze-umaze          & $\textbf{99.0} \pm 3.0$  & $\textbf{99.0} \pm 3.0$ \\
umaze-diverse  & $89.0 \pm 7.0$ & $\textbf{90.0} \pm 6.3$ \\
medium-play    & $86.7 \pm 11.5$  & $\textbf{94.4} \pm 6.8$ \\
medium-diverse & $83.0 \pm 10.1$ & $\textbf{94.0} \pm 8.0$ \\
large-play     & $59.0 \pm 25.5$ & $\textbf{67.0} \pm 26.1$ \\
large-diverse  & $51.0 \pm 27.0$ & $\textbf{66.0} \pm 30.4$ \\
\midrule
Average & $77.62$ & $\textbf{85.07}$ \\
\bottomrule
\end{tabular*}
\caption{Average normalized score on ten unseen seeds on \textbf{AntMaze}. $\pm$ represents the standard deviation across seeds.}
\label{tab:rebrac_antmaze}
\end{wraptable}
\textbf{Experimental details.}
Our approach has an offline training stage and an inference-time adaptation stage. We first train a policy $\pi_{\psi}$ and critic $Q_{\phi}$ on the offline dataset using ReBRAC \citep{tarasov2023rebrac}. We then train a diffusion transition model $f_{\theta}$ on dataset transitions and a reward predictor $r_{\xi}$ on dataset rewards using supervised learning. Since the inference-time adaptation differentiates through the diffusion sampler, we refer to this instantiation of our method as $\DiffMPC$.
Training details and results for both models are deferred to Appendix~\ref{sec:exp_details}. 

To train the ReBRAC policy we follow the hyper-parameter choices from \citep{tarasov2023minimalist}. We tune the inference time parameters on a single training seed (different from the evaluation seeds). For a detailed description and hyper-parameter choices see Appendix~\ref{sec:exp_details}

\textbf{Results.} Across D4RL \citep{fu2020d4rl} continuous-control tasks, $\DiffMPC$ outperforms existing strong offline RL baselines (see Table~\ref{tab:rebrac_mujoco}). It achieves average normalized scores of $98.5$ on Gym MuJoCo (vs. ReBRAC $93.9$). On all datasets, our method $\DiffMPC$ improves the performance from the pre-trained ReBRAC policy, showing that inference time adaptation of the policy brings statistical benefits. 
We also compare $\DiffMPC$ with the pre-trained ReBRAC (outperforms TD3+BC, IQL, CQL and SAC-RND  in \citep{tarasov2023minimalist}) on D4RL Antmaze task in Table~\ref{tab:rebrac_antmaze} and show that inference time adaptation improves the performance. Our method acheives an average score of $85.07$ on AntMaze (vs.\ ReBRAC $77.62$), with the largest gains on hard AntMaze variants (large-play $59\rightarrow 67$, large-diverse $51\rightarrow66$).

\begin{table}[t]
\centering
\resizebox{\linewidth}{!}{%
\begin{tabular}{lrrrrrrr}
\toprule
Dataset & DT & TT & MOPO & MOReL & MBOP & Diffuser & $\DiffMPC$ \\
\midrule
\multicolumn{8}{c}{\textbf{HalfCheetah}} \\
\midrule
medium-expert & 86.8 & 95.0 & 63.3 & 53.3 & \textbf{105.9} & $88.9 \pm 0.3$ & $105.35 \pm 1.8$ \\
medium        & 42.6 & 46.9 & 42.3 & 42.1 & 44.6 & $42.8 \pm 0.3$ & $\textbf{70.05} \pm 1.8$ \\
medium-replay & 36.6 & 41.9 & 53.1 & 40.2 & 42.3 & $37.7 \pm 0.5$ & $\textbf{59.89} \pm 1.2$ \\
\midrule
\multicolumn{8}{c}{\textbf{Hopper}} \\
\midrule
medium-expert & 107.6 & \textbf{110.0} & 23.7 & 108.7 & 55.1 & $103.3 \pm 1.3$ & $107.420 \pm 5.30$ \\
medium        & 67.6 & 61.1 & 28.0 & 95.4 & 48.8 & $74.3 \pm 1.4$ & $\textbf{103.38} \pm 0.35$ \\
medium-replay & 82.7 & 91.5 & 67.5 & 93.6 & 12.4 & $93.6 \pm 0.4$ & $\textbf{103.14} \pm 0.45$ \\
\midrule
\multicolumn{8}{c}{\textbf{Walker2d}} \\
\midrule
medium-expert & 108.1 & 101.9 & 44.6 & 95.6 & 70.2 & $106.9 \pm 0.2$ & $\textbf{115.76} \pm 1.04$ \\
medium        & 74.0 & 79.0 & 17.8 & 77.8 & 41.0 & $79.6 \pm 0.55$ & $\textbf{88.91} \pm 0.6$ \\
medium-replay & 66.6 & 82.6 & 39.0 & 49.8 & 9.7 & $70.6 \pm 1.6$ & $\textbf{95.87} \pm 1.19$ \\
\midrule
Average & 74.7 & 78.9 & 42.1 & 72.9 & 47.8 & 77.5 & \textbf{94.4} \\
\bottomrule
\end{tabular}%
}
\vspace{1ex}
\caption{Normalized scores on D4RL MuJoCo tasks. DT, TT, MOPO, MOReL, MBOP, and Diffuser values are taken from \cite{janner2022planning}. The symbol $\pm$ represents the standard deviation across the seeds.
}
\label{tab:selected_methods_mujoco_mpcwdwm}
\end{table}

\textbf{Generative Model Baselines.} We also compare against generative model based approaches to Offline RL. Decision Transformer (DT) casts offline RL as return conditioned sequence modeling and predicts actions with a Transformer conditioned on the desired return and history \cite{chen2021decisiontransformer}.
Trajectory Transformer (TT) models trajectories as sequences and plans by decoding high reward trajectories under the learned sequence model \cite{janner2021trajectorytransformer}.
MOPO learns a dynamics model from offline data and optimizes a policy in the learned model while penalizing rewards using model uncertainty to discourage exploiting model errors \cite{yu2020mopo}.
MOReL constructs a pessimistic surrogate MDP that transitions to an absorbing state in regions where the model is unreliable, then learns a policy within this conservative MDP \cite{kidambi2020morel}.
MBOP performs model predictive planning in a learned dynamics model while constraining candidate action sequences with a behavior prior and extending the horizon with a value estimate \cite{argenson2021mbop}.
Finally, Diffuser learns a diffusion model over trajectories and plans by iterative denoising with flexible conditioning, so sampling corresponds to generating high reward behavior consistent with the constraints \cite{janner2022diffuser}. 
Since prior generative-model baselines report MuJoCo results on a subset of D4RL tasks, we use the same evaluation subset for a direct comparison.
Table~\ref{tab:selected_methods_mujoco_mpcwdwm} shows that $\DiffMPC$ consistently outperforms prior generative model based offline RL approaches across all environments.
In particular, $\DiffMPC$ achieves the best average score of $94.4$, outperforming Diffuser at $77.5$.

\textbf{MPC-based baselines.}
We also compare against recent MPC-based approaches for continuous control. 
TD-MPC and TD-MPC2 use learned latent dynamics together with Model Predictive Path Integral (MPPI) control: at each step, candidate action sequences are sampled, rolled out through the latent dynamics, scored using reward and terminal value estimates, and iteratively refined into a sampling distribution from which the first action is executed \citep{hansen2022tdmpc,hansen2024tdmpc2}.
These methods are developed for online control, where the replay buffer is populated by ongoing environment interaction; we provide a detailed comparison with these latent-space MPC methods, together with TD-MPC2 result tables on D4RL Gym-MuJoCo, in Appendix~\ref{app:latent-world-models}. 
We further compare against gradient-based optimization of action sequences through learned differentiable rollouts \citep{jyothir2023gradient}; Appendix~\ref{app:gradient-mpc} gives the detailed design-space comparison and the full action-update ablation. In both cases, $\DiffMPC$ outperforms the corresponding baselines on offline D4RL Gym-MuJoCo datasets.

\begin{wraptable}{r}{0.58\textwidth}
\centering
\renewcommand{\arraystretch}{1.05}
\setlength{\tabcolsep}{2pt}
\begin{tabular*}{\linewidth}{@{\extracolsep{\fill}}l|cc@{}}
\toprule
Task Name & D-MPC & $\DiffMPC$ \\
\midrule
halfcheetah medium        & $46.00 \pm 0.17$ & $\textbf{70.05} \pm 1.8$ \\
hopper medium             & $61.24 \pm 2.30$ & $\textbf{103.38} \pm 0.35$ \\
walker2d medium           & $76.21 \pm 2.67$ & $\textbf{88.91} \pm 0.6$ \\
halfcheetah medium-replay & $41.12 \pm 0.31$ & $\textbf{59.89} \pm 1.2$ \\
hopper medium-replay      & $92.49 \pm 2.23$ & $\textbf{103.14} \pm 0.45$ \\
walker2d medium-replay    & $78.81 \pm 4.19$ & $\textbf{95.87} \pm 1.19$ \\
\midrule
Average & $65.98$ & $\textbf{86.87}$ \\
\bottomrule
\end{tabular*}
\caption{Average normalized score on ten unseen seeds. $\pm$ represents the standard deviation across seeds.\vspace{-2ex}}
\label{tab:dmpc_locomotion}
\end{wraptable}

Finally, we compare against D-MPC, a diffusion-based MPC method that trains a diffusion action proposal and a diffusion dynamics model from offline data, then performs gradient-free MPC by random shooting over sampled action sequences \citep{zhou2024dmpc}. 
D-MPC improves over prior model-based MPC baselines such as MBOP, but on the locomotion subset in its original comparison, its average score remains below other offline RL baselines (which $\DiffMPC$ outperforms). 
Unlike D-MPC, which selects the best sampled open-loop action sequence, $\DiffMPC$ updates the parameters of a pretrained policy by backpropagating through differentiable imagined rollouts inside the MPC loop. 
The detailed comparison is given in Appendix~\ref{app:dmpc}, 
Since D-MPC reports MuJoCo results on a subset of D4RL tasks, we use the same evaluation subset for a direct comparison. Table~\ref{tab:dmpc_locomotion} shows that $\DiffMPC$ achieves an average normalized score of $86.87$, compared with $65.98$ for D-MPC, and improves over D-MPC on all six tasks.

\textbf{Ablation.}
We conduct extensive ablations on the key hyperparameters of $\DiffMPC$: planning horizon $H$, inner gradient steps $E$, step size $\eta$, DDIM sampling steps $K$, and the number of Monte Carlo particles $M$, across all Gym-MuJoCo datasets (Appendix~\ref{sec:ablations}). The method is robust across a wide range: moderate horizons ($H \in \{5, 10\}$) and a single inner step ($E=1$) recover most of the gains from inference-time adaptation at modest cost, while $M$ has negligible effect within $\{512, 1024, 4096\}$. Larger $E$ and $\eta$ yield further improvements on stable environments such as Walker2d, but can amplify model and value errors on termination-sensitive tasks like Hopper, consistent with known failure modes of offline RL under aggressive policy updates. The DDIM ablation shows diminishing returns from additional denoising steps on most datasets, motivating the one-step MeanFlow sampler in Section~\ref{sec:flowmpc}. We additionally validate $\DiffMPC$ with SAC-RND as the pre-trained policy, where it improves the average normalized score from $84.08$ to $93.00$ (Appendix~\ref{sec:sac-rnd}).

\section{Inference-Time Compute and One-Step World Model}
\label{sec:flowmpc}
\begin{table}[ht]
\centering
\begin{minipage}[t]{0.48\linewidth}
\centering
\vspace{0pt}
\resizebox{\linewidth}{!}{%
\begin{tabular}{c|c|c}
\toprule
$H$ & Zero-shot (ms/step) & DiffMPC (ms/step) \\
\midrule
1 & $0.409 \pm 0.053$ & $48.42 \pm 0.98$ \\
2 & $0.409 \pm 0.053$ & $84.76 \pm 1.76$ \\
5 & $0.409 \pm 0.053$ & $194.96 \pm 14.51$ \\
\bottomrule
\end{tabular}%
}
\caption{Per-step wall-clock time on Hopper while sweeping planning horizon $H$ (DDIM denoising steps $K = 10$, inner gradient steps $E=1$, one episode of 1000 steps, mean $\pm$ std).\vspace{-1ex}}
\label{tab:hopper_h_sweep_time}
\end{minipage}
\hfill
\begin{minipage}[t]{0.48\linewidth}
\centering
\vspace{0pt}
\resizebox{\linewidth}{!}{%
\begin{tabular}{c|c|c}
\toprule
$K$ & Zero-shot (ms/step) & DiffMPC (ms/step) \\
\midrule
1 & $0.390 \pm 0.049$ & $57.36 \pm 1.51$ \\
5 & $0.390 \pm 0.049$ & $121.86 \pm 11.37$ \\
10 & $0.390 \pm 0.049$ & $198.77 \pm 24.64$ \\
\bottomrule
\end{tabular}%
}
\caption{Per-step wall-clock time on Hopper while sweeping DDIM steps $K$ (planning horizon $H=5$, inner gradient steps $E=1$, one episode of 1000 steps, mean $\pm$ std).\vspace{-1ex}}
\label{tab:hopper_k_sweep_time}
\end{minipage}
\end{table}

A practical limitation of $\DiffMPC$ is the cost of iterative diffusion sampling: each imagined transition requires $K$ sequential denoising steps, and inference-time MPC issues many such queries per step. Tables~\ref{tab:hopper_h_sweep_time} and~\ref{tab:hopper_k_sweep_time} quantify this overhead on Hopper full-replay. The zero-shot ReBRAC policy selects an action in roughly $0.4$ ms, whereas $\DiffMPC$ at $H=5$, $K=10$ takes nearly $200$ ms per step, almost three orders of magnitude slower. Both the planning horizon and the number of denoising steps contribute roughly linearly to this cost: increasing $H$ from $1$ to $5$ raises latency from $48$ to $195$ ms, and increasing $K$ from $1$ to $10$ raises it from $57$ to $199$ ms. Since $K$ multiplies the cost of every world-model call inside the rollout, it is the more attractive axis to improve.

This motivates collapsing the $K$-step denoising chain into a single forward pass. We replace the diffusion sampler with a one-step \emph{MeanFlow} model \citep{geng2025meanflows}, which maps $(s_t,a_t,\epsilon_t)$ to $s_{t+1}$ in a single forward pass while remaining differentiable. To offset the loss of expressiveness from collapsing many denoising steps into one, we additionally train the world model under a \emph{policy-tilted} objective that upweights state--action pairs likely under a strong pretrained policy, focusing capacity on the regions the deployed controller actually queries. We call the resulting method $\FlowMPC$.

On the D4RL Gym-MuJoCo suite, $\FlowMPC$ improves the average normalized score over the ReBRAC initialization from $94.93$ to $97.87$, with gains on all 15 datasets. The improvements are largest on the medium-quality datasets where there is the most room for inference-time policy refinement (e.g., halfcheetah-medium $65.42 \to 70.06$, walker2d-medium $84.79 \to 88.59$, halfcheetah-medium-replay $50.78 \to 56.51$), while remaining competitive with $\DiffMPC$ across most regimes. At the same time, $\FlowMPC$ reduces per-transition forward latency by roughly $8\times$ at $K=10$ and total forward+backward time by $15$--$75\times$ as $K$ grows. Full derivations of the MeanFlow training objective, the policy-tilted scheme, and complete benchmark and latency results are deferred to Appendix~\ref{sec:flowmpc_app}.

\paragraph{Scaling the planning horizon.} A second limitation of $\DiffMPC$ is that inference-time MPC cost scales linearly with horizon $H$. The natural fix is to replace the autoregressive rollout with a \emph{trajectory-level} generator producing $(s_{t+1},\dots,s_{t+H})$ in one forward pass, conditioned on the current state and policy. Several recent works pursue this (though not at inference time): \citet{rigter2024polygrad} train a diffusion model for on-policy trajectories via policy-gradient guidance, \citet{jackson2024policy} use policy-guided diffusion under a regularized target, and \citet{ajay2023conditionalgen} cast offline decision-making as conditional trajectory generation with classifier-free return guidance. The challenge in our setting is that $\pi_\psi$ is updated at inference time, so the generator must produce rollouts consistent with the \emph{current} $\pi_\psi$ without autoregressive unrolling. Possible approaches: policy-gradient guidance during sampling (à la PolyGRAD), or periodic fine-tuning of the generator with policy scores via RL-based methods \citep{black2024ddpo} or adjoint matching \citep{domingoenrich2025adjoint}. We view this as a promising path to amortize the per-step cost of long-horizon $\DiffMPC$, and leave a thorough exploration to future work.

Our primary goal in this work is to demonstrate the statistical benefits of inference-time policy optimization with differentiable world models. We take initial steps toward reducing its computational cost, through FlowMPC and the discussion of trajectory-level generation above, and leave a more thorough investigation to future work.

\section{Conclusion}
\label{sec:conclusion}

We introduced an inference-time adaptation framework for offline RL that refines a
pretrained policy on the fly using the current state and a learned differentiable world
model. The core design is a Differentiable World Model pipeline that supports end-to-end
gradient computation through imagined finite-horizon rollouts, enabling direct optimization
of policy parameters at deployment via a surrogate objective combining predicted rewards
and a terminal critic. Our diffusion instantiation, $\DiffMPC$, yields consistent gains over
strong offline RL baselines across D4RL Gym MuJoCo and AntMaze. Recognizing that iterative
diffusion sampling is the dominant inference-time cost, we proposed $\FlowMPC$, a one-step
MeanFlow variant trained under a policy-tilted objective, which recovers most of the gains
at a fraction of the wall-clock cost. Together, these results show that inference-time
computation, properly amortized, is a practical lever for offline RL. Promising directions
for future work include trajectory-level generative world models that produce finite-horizon
futures in a single pass, evaluations using generative policies (eg. Flow Q-Learning \citep{park2025flow}), and offline-to-online extensions with periodic world-model updates
to improve robustness under distribution shift.

\bibliography{references}
\bibliographystyle{abbrvnat}



\appendix

\clearpage
\section{Related Works}
\label{sec:related}
\paragraph{Offline reinforcement learning.}
Offline reinforcement learning studies how to learn a high return policy from a fixed dataset while mitigating distribution shift between the learned policy and the logged behavior \cite{lange2012batch,levine2020offline}. A large family of methods constrains policy improvement to remain close to actions supported by the dataset, either by explicitly regularizing the learned policy toward behavior cloning or by limiting the space of candidate actions used for value based updates \cite{wu2019brac,fujimoto2019bcq,kumar2019bear,fujimoto2021td3bc,tarasov2023rebrac}. Another family focuses on conservative value estimation to reduce overestimation when evaluating actions that are unlikely under the dataset, with CQL penalizing high values assigned to low density actions \cite{kumar2020cql}. Complementary approaches avoid explicit out of distribution action maximization during policy improvement by extracting policies from value functions using implicit or advantage weighted updates, as in IQL and AWAC \cite{kostrikov2022iql,nair2021awac}. Across these lines, the shared objective is to maximize return while ensuring that learning and improvement remain grounded in the support of the offline data.

\paragraph{Batch reinforcement learning and approximate dynamic programming.}
Offline RL is closely related to earlier work on \emph{batch} reinforcement learning and approximate dynamic programming, where policies are learned from logged transition tuples without additional interaction. Classical fitted value-iteration style methods include tree-based fitted Q-iteration \citep{ernst2005tree} and least-squares policy iteration \citep{lagoudakis2003lspi}, along with neural fitted Q-iteration (NFQ) as an early neural approach \citep{riedmiller2005nfq}. These methods clarified the basic promise of learning from static data and the practical dependence on function approximation, dataset coverage, and policy evaluation quality; for broader background, see the batch RL chapter by \citet{lange2012batch} and the modern offline RL tutorial by \citet{levine2020offline}.

\paragraph{Coverage, distribution shift, and off-policy evaluation.}
A recurring theme since early approximate dynamic programming is that learning from a fixed dataset hinges on how well the data covers the state--action regions needed by the learned policy. Analyses of fitted value iteration make this dependence explicit through approximation and sampling error terms that are amplified by distribution shift across Bellman backups \citep{munos2008fvi,antos2007fqi}. In parallel, the offline setting is tightly connected to off-policy evaluation (OPE), which estimates the performance of a target policy from data generated by different behavior policies. Doubly robust estimators \citep{jiang2016doubly} and subsequent refinements aimed at variance control and model-based mixing \citep{thomas2016dataefficient} highlight the statistical difficulty of evaluating and optimizing policies under mismatch, which later offline RL methods address implicitly through regularization or pessimism.

\paragraph{Deep model-free offline RL.}
With deep function approximation, na\"ive off-policy actor--critic can fail offline due to extrapolation error on actions that are weakly supported by the dataset, which is then compounded by bootstrapping \citep{kumar2019bear,fujimoto2019offpolicy}. Many practical algorithms can be viewed as enforcing \emph{policy support constraints} during improvement. Behavior-regularized actor--critic \citep{wu2019brac} and advantage-weighted updates \citep{peng2019awr} bias learning toward dataset actions, while critic-regularized regression \citep{wang2020crr} fits policies via weighted regression modulated by value estimates. A particularly influential baseline is TD3+BC, which augments an online actor update with a behavior cloning term and careful normalization \citep{fujimoto2021td3bc}. Another line seeks \emph{pessimism} in value estimation: Conservative Q-Learning (CQL) explicitly lowers Q-values for out-of-distribution actions \citep{kumar2020cql}, and Implicit Q-Learning (IQL) avoids explicit maximization over unseen actions by combining expectile value regression with advantage-weighted policy updates \citep{kostrikov2022iql}. Density-ratio and stationary-distribution correction viewpoints also motivate offline policy optimization; for example, \citet{lee2021optidice} estimate distribution corrections to optimize policies from fixed data.

\paragraph{Model-based offline RL.}
Model-based offline RL learns a dynamics model from logged data and uses model rollouts for planning or for generating additional training targets, but must guard against \emph{model exploitation} when rollouts drift out of support. MOPO penalizes uncertainty to discourage out-of-distribution model rollouts \citep{yu2020mopo}, MOReL constructs pessimistic MDP variants to avoid unreliable regions \citep{kidambi2020morel}, and COMBO combines model rollouts with conservative value regularization \citep{yu2021combo}. Planning-first formulations such as MBOP emphasize using learned models and short-horizon planning as the primary control mechanism under offline training \citep{argenson2021mbop}. Collectively, these works frame a core tradeoff between using model-based lookahead to improve decisions and controlling error accumulation from imperfect learned dynamics.

\paragraph{Offline RL as sequence modeling and generative policy classes.}
A complementary direction casts offline RL as conditional sequence modeling, learning to output actions from trajectory context rather than performing Bellman backups. Decision Transformer \citep{chen2021decisiontransformer} conditions on desired returns, while Trajectory Transformer treats trajectories as token sequences for planning and action selection \citep{janner2021trajectorytransformer}. More recently, \citet{emmons2021rvs} study when supervised learning alone can suffice for strong offline performance, emphasizing choices of conditioning information and model capacity. Separately, generative modeling has been used to enlarge policy classes while retaining dataset support; diffusion-based policy representations such as Diffusion-QL \citep{wang2022diffusionql} and diffusion planners \citep{janner2022diffuser} demonstrate strong performance but typically require iterative sampling at inference.

\section{Diffusion Sampler and Rollout Gradients}
\label{sec:proof_app}
\subsection{Differentiable Diffusion Sampler}
\label{sec:dds}
Our objective is to learn a parametric differentiable diffusion sampler that conditioned on a given state-action pair $s_t, a_t$ at time $t$ and sampled noise $\epsilon_{t}$ generates the next state $s_{t+1}$ as follows: 
$s_{t+1} = f_{\theta}(s_t, a_t, \varepsilon_t), \varepsilon_t \sim p_0(\varepsilon)$.
Here $f_{\theta}$ is the \emph{reverse diffusion sampler}, written as a deterministic computation graph, and $\varepsilon_t$ is a collection of Gaussian random variables used in the generation process. Equivalently, the diffusion model specifies a conditional distribution over next states, $p_{\theta}(s_{t+1}\mid s_t,a_t)$, together with a reparameterized sampling procedure $s_{t+1}=f_{\theta}(s_t,a_t,\varepsilon_t)$ whose randomness is isolated in $\varepsilon_t$.


We learn $p_{\theta}(s_{t+1}\mid s_t,a_t)$ from an offline dataset of transitions $\mathcal{D} = \{(s_t,a_t,r_t,s_{t+1})\}_{t=1}^N.$
The diffusion model is trained to represent the conditional law of $s_{t+1}$ given $(s_t,a_t)$ by introducing a \emph{forward noising process} on $s_{t+1}$ and a learned \emph{reverse denoising process} that inverts this noising when conditioned on $(s_t,a_t)$. At sampling time, the reverse process induces the map $f_{\theta}$.

\textbf{Forward process:}
Fix a diffusion horizon $K\in\mathbb{N}$ and a variance schedule $\{\alpha_k\}_{k=1}^K \subset (0,1)$. For each transition tuple $(s_t,a_t,s_{t+1})\in\mathcal{D}$, define a Markovian forward process that progressively corrupts the next state as follows:
\begin{align*}
s_{t+1}^{(0)} &= s_{t+1}, \quad
s_{t+1}^{(k)} \mid s_{t+1}^{(k-1)}
\sim q\big(s^{(k)} \mid s^{(k-1)}\big) = \mathcal{N}\big(\sqrt{\alpha_k}\, s_{t+1}^{(k-1)}, (1-\alpha_k) I\big),
 k=1,\dots,K.
\end{align*}
Let $\bar\alpha_k := \prod_{j=1}^k \alpha_j$. This choice implies a closed-form marginal for any noise level $k$ given by
$q\big(s_{t+1}^{(k)} \mid s_{t+1}\big)
=
\mathcal{N}\big(\sqrt{\bar\alpha_k}\, s_{t+1}, (1-\bar\alpha_k)I\big)$. In particular, one can write a reparameterized sample from the marginal as
\begin{align*}
s_{t+1}^{(k)} = \sqrt{\bar\alpha_k}\, s_{t+1} + \sqrt{1-\bar\alpha_k}\,\epsilon,
\qquad \epsilon \sim \mathcal{N}(0,I).
\end{align*}

\paragraph{Conditional reverse process:}
The reverse process aims to sample $s_{t+1}$ conditioned on $(s_t,a_t)$ by iteratively denoising from a Gaussian reference distribution at level $K$. Let the conditioning be $c_t := (s_t,a_t).$ At sampling time, initialize from a base noise variable $s_{t+1}^{(K)} \sim \mathcal{N}(0,I)$, and apply a learned reverse transition for $k\in[K-1]$ using $s_{t+1}^{(k-1)}
= \frac{1}{\sqrt{\alpha_k}}
  \Big(
    s_{t+1}^{(k)}
    - (1-\alpha_k)\, \hat\epsilon_{\theta}\big(s_{t+1}^{(k)}, k, c_t\big)
  \Big)
  + \sigma_k z_{k-1}, 
z_{k-1} \sim \mathcal{N}(0,I).$
Here $\hat\epsilon_{\theta}(\cdot,k,c_t)$ is a parametric predictor of the noise component at level $k$, and $\{\sigma_k\}$ specifies the reverse-process variance. The final denoised sample is $s_{t+1}^{(0)} \sim p_{\theta}(\cdot \mid s_t,a_t)$, and we define the reverse sampler $f_{\theta}$ by collecting all Gaussian random variables used by the reverse procedure into $\varepsilon_t := (z_K,z_{K-1},\dots,z_0)$, so that the sampled next state can be written as $s_{t+1}^{(0)} = f_{\theta}(s_t,a_t,\varepsilon_t)$.
We define the initialization map $g_K:\mathbb{R}^d\to\mathbb{R}^d$ and the reverse-step map
$h_k:\mathbb{R}^d\times\mathbb{R}^d\times\mathbb{R}^m\times\mathbb{R}^d\to\mathbb{R}^d$ by
\begin{align}
g_K(z) &:= z,\nonumber\\
h_k(u,s,a,z)
&:=
\frac{1}{\sqrt{\alpha_k}}
\Big(
u-(1-\alpha_k)\,
\hat\epsilon_{\theta}\big(u,k,(s,a)\big)
\Big)+\sigma_k z,\;\;\;\;\;
\text{for } k=1,\ldots,K-1
\label{eq:g_h_def}
\end{align}
so that $s_{t+1}^{(k-1)}\!\!=\!h_k(s_{t+1}^{(k)},s_t,a_t,z_{k-1})$ and $s_{t+1}^{(K)}=g_K(z_K)$.

\paragraph{Deterministic computation graph for fixed noise.}
For any fixed realization of $\varepsilon_t$, the mapping $(s_t,a_t)\mapsto s_{t+1}^{(0)}$ is a deterministic composition of (i) linear operations, (ii) evaluations of the denoiser $\hat\epsilon_{\theta}(\cdot,k,c_t)$ at each reverse step, and (iii) additive terms determined by the fixed Gaussian draws. Consequently, $f_{\theta}$ is a differentiable computation graph in its inputs $(s_t,a_t)$ for fixed $\varepsilon_t$.

\paragraph{Learning objective:}
We train $\hat\epsilon_{\theta}$ to invert the forward corruption of $s_{t+1}$, conditioned on $(s_t,a_t)$. Using the marginal reparameterization at a randomly chosen diffusion level $k$, we form
$s_{t+1}^{(k)} = \sqrt{\bar\alpha_k}\, s_{t+1} + \sqrt{1-\bar\alpha_k}\,\epsilon,
\qquad \epsilon \sim \mathcal{N}(0,I)$,
and minimize the conditional noise-prediction error
\begin{align*}
\mathcal{L}(\theta)
&=
\mathbb{E}_{\substack{(s_t,a_t,s_{t+1})\sim \mathcal{D}\\ k \sim \mathrm{Unif}(\{1,\dots,K\})\\ \epsilon \sim \mathcal{N}(0,I)}}
\Big[
\big\|
\epsilon - \hat\epsilon_{\theta}\big(s_{t+1}^{(k)}, k, c_t\big)
\big\|_2^2
\Big].
\end{align*}
Intuitively, this objective teaches the denoiser to recover the injected Gaussian noise at arbitrary noise levels while leveraging $(s_t,a_t)$ as side information. After training, the resulting reverse process defines a conditional generative model $p_{\theta}(s_{t+1}\mid s_t,a_t)$, and its sampling procedure is precisely the transition map
$s_{t+1} = f_{\theta}(s_t,a_t,\varepsilon_t), \varepsilon_t \sim p_0(\varepsilon)$,
which we treat as a learned, differentiable simulator of one-step dynamics.

\subsection{Gradient recursion through diffusion rollouts}
\label{sec:mpc_backprop_diffusion_critic}
Recall that given a fixed noise sequence
$\varepsilon_{t:t+H-1}=(\varepsilon_t,\ldots,\varepsilon_{t+H-1})$
define the imagined rollout recursively by
\begin{align}
\tilde{s}_0 &= s_t,
\,\,
\tilde{a}_j = \pi_{\psi}(\tilde{s}_j),
\,\,
\tilde{s}_{j+1} = f_{\theta}(\tilde{s}_j,\tilde{a}_j,\varepsilon_{t+j}),
\label{eq:mpc_rollout_recursion}
\end{align}
The next theorem states a compact Jacobian recursion for the gradient of the per-noise objective
$L(\psi;\varepsilon_{t:t+H-1})$ under the rollout dynamics in
\eqref{eq:mpc_rollout_recursion}. Define the rollout Jacobians, for $j=0,\ldots,H-1$, 
\begin{align*}
\Pi_s(j) &:= \nabla_s \pi_{\psi}(s)\big|_{s=\tilde{s}_j},
\quad
\Pi_{\psi}(j) := \nabla_{\psi}\pi_{\psi}(\tilde{s}_j),
\\
F_s(j) &:= \nabla_s f_{\theta}(s,a,\varepsilon_{t+j})\big|_{s=\tilde{s}_j,a=\tilde{a}_j},
\\
F_a(j) &:= \nabla_a f_{\theta}(s,a,\varepsilon_{t+j})\big|_{s=\tilde{s}_j,a=\tilde{a}_j}.
\end{align*}
Let $G_j := \nabla_{\psi}\tilde{s}_j$ and $D_j := \nabla_{\psi}\tilde{a}_j$. Then $G_0=0$ and, for $j=0,\ldots,H-1$,
\begin{align*}
D_j &= \Pi_s(j)\,G_j + \Pi_{\psi}(j),
\\
G_{j+1} &= F_s(j)\,G_j + F_a(j)\,D_j.
\end{align*}
Further define the gradient of the reward and the critic as:
\begin{align*}
r_s(j) := \nabla_s r_{\xi}(\tilde{s}_j,\tilde{a}_j),
\;\;
r_a(j) := \nabla_a r_{\xi}(\tilde{s}_j,\tilde{a}_j),\\
Q_s := \nabla_s Q_{\phi}(\tilde{s}_H,\tilde{a}_H),
\;\;
Q_a := \nabla_a Q_{\phi}(\tilde{s}_H,\tilde{a}_H).
\end{align*}
\begin{tcolorbox}[
  width=\textwidth,
  colback=gray!1,        
  colframe=black,         
  arc=3pt,               
  boxrule=1pt,            
  left=3pt, right=3pt,  
  top=3pt,  bottom=2pt  
]
\begin{theorem}[\textbf{Gradient recursion}]
\label{thm:grad_recursion_diffusion_mpc}
Fix a time $t$, a horizon $H$, and a noise sequence $\varepsilon_{t:t+H-1}$, and let
$\{(\tilde{s}_j,\tilde{a}_j)\}_{j=0}^{H}$ be defined by \eqref{eq:mpc_rollout_recursion}.
Assume $\pi_{\psi}$ is differentiable in $\psi$ and its state input, and $f_{\theta}$, $r_{\xi}$, and $Q_{\phi}$
are differentiable in their state and action arguments.

Then the gradient of the per-noise return is
\begin{align*}
\nabla_{\psi} L(\psi;\varepsilon_{t:t+H-1})
&=
\sum_{j=0}^{H-1}\gamma^j\,
\Big(
r_s(j)\,G_j + r_a(j)\,D_j
\Big) \nonumber
\\
&+
\gamma^H\,
\Big(
Q_s\,G_H + Q_a\,D_H
\Big).
\end{align*}
Moreover, if $f_{\theta}$ is implemented by a reverse diffusion recursion as given by \eqref{eq:g_h_def}
then
\begin{align*}
\nabla_a f_{\theta}(s,a,\varepsilon) = A_0,
\qquad
\nabla_s f_{\theta}(s,a,\varepsilon) = B_0,
\end{align*}
where $A_{K}=0$, $B_{K}=0$, and for $k=K,\ldots,1$,
\begin{align*}
A_{k-1}
&=
\frac{\partial h_{k}}{\partial u}\,A_{k}
+
\frac{\partial h_{k}}{\partial a},
\qquad
B_{k-1}
=
\frac{\partial h_{k}}{\partial u}\,B_{k}
+
\frac{\partial h_{k}}{\partial s},
\end{align*}
with all partial derivatives evaluated at $(u,s,a,z)=(s^{(k)},s,a,z_{k-1})$.
\end{theorem}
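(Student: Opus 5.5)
The plan is to treat the whole rollout as a finite composition of differentiable maps and apply the multivariate chain rule, i.e. forward-mode differentiation with respect to $\psi$. The noise sequence $\varepsilon_{t:t+H-1}$ is fixed, so every quantity in \eqref{eq:mpc_rollout_recursion} is a deterministic function of $\psi$. By the differentiability assumptions, each $\tilde{s}_j$ and $\tilde{a}_j$ is differentiable in $\psi$.

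\textbf{Step 1: the Jacobian recursion.} I will verify the recursion for $G_j$ and $D_j$ by induction on $j$. Since $\tilde s_0=s_t$ does not depend on $\psi$, we have $G_0=0$. The action is $\tilde a_j=\pi_\psi(\tilde s_j)$, and $\psi$ enters both directly and through $\tilde s_j$. The chain rule therefore gives $D_j=\Pi_s(j)G_j+\Pi_\psi(j)$. Similarly, $\tilde s_{j+1}=f_\theta(\tilde s_j,\tilde a_j,\varepsilon_{t+j})$ depends on $\psi$ only through $(\tilde s_j,\tilde a_j)$, since $\theta$ and the noise are fixed. This gives $G_{j+1}=F_s(j)G_j+F_a(j)D_j$. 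The definition of $D_j$ extends to $j=H$ with the same formula, because $\tilde a_H=\pi_\psi(\tilde s_H)$ is the action fed into the critic.

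\textbf{Step 2: the gradient of $L$.} Next I will differentiate $L$ term by term, using linearity of the gradient.
\begin{itemize}
\item Each stage reward $\gamma^j r_\xi(\tilde s_j,\tilde a_j)$ has gradient $\gamma^j(r_s(j)G_j+r_a(j)D_j)$, where $r_s,r_a$ are row vectors.
\item The terminal term $\gamma^H Q_\phi(\tilde s_H,\pi_\psi(\tilde s_H))$ has gradient $\gamma^H(Q_sG_H+Q_aD_H)$.
\end{itemize}
Summing these terms gives the first display.

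\textbf{Step 3: the diffusion Jacobians.} For the second claim, I will unroll $f_\theta$ as the composition $s^{(K)}=g_K(z_K)$ followed by $s^{(k-1)}=h_k(s^{(k)},s,a,z_{k-1})$ for $k=K,\dots,1$. I set $A_k:=\nabla_a s^{(k)}$ and $B_k:=\nabla_s s^{(k)}$. Since $g_K(z_K)=z_K$ does not depend on $(s,a)$, we get $A_K=B_K=0$. At each reverse step, $(s,a)$ enter $h_k$ in two ways: directly through the conditioning of $\hat\epsilon_\theta$, and indirectly through $u=s^{(k)}$. The chain rule then gives $A_{k-1}=\partial_u h_k\,A_k+\partial_a h_k$, and likewise for $B$. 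Finally, $f_\theta=s^{(0)}$ yields $\nabla_af_\theta=A_0$ and $\nabla_sf_\theta=B_0$.

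\textbf{Main obstacle: index bookkeeping.} The only delicate point I anticipate is reconciling indices. In \eqref{eq:g_h_def}, $h_k$ is defined only for $k\le K-1$, whereas the recursion runs over $k=K,\dots,1$. I would resolve this by interpreting $h_K$ with the same formula, so that the final step $h_1$ produces $s^{(0)}$. I would also check that the Jacobian shapes (row versus column conventions) compose consistently. Otherwise the argument is routine chain-rule bookkeeping.
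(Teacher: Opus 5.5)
Your proposal is correct and takes essentially the same route as the paper's proof: forward-mode chain rule for the sensitivities $G_j, D_j$ (with $D_H$ obtained from the same policy identity at $j=H$), term-by-term differentiation of $L$, and the depth-wise recursion for $A_k, B_k$ started from $A_K=B_K=0$. The indexing mismatch you flag is real, since \eqref{eq:g_h_def} defines $h_k$ only for $k\le K-1$, and the paper's proof resolves it silently exactly as you propose, by applying $h_k$ for all $k=K,\ldots,1$.
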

\end{tcolorbox}
\vspace{1.5ex}

\begin{proof}
    
We fix a time $t$, horizon $H$, and a noise sequence $\varepsilon_{t:t+H-1}$.
All quantities below are defined along the imagined rollout \eqref{eq:mpc_rollout_recursion}.
Since the noise is fixed, the rollout is deterministic, hence the gradient
$\nabla_{\psi}L(\psi;\varepsilon_{t:t+H-1})$ is obtained by repeated application of the chain rule.

\begin{enumerate}
    \item \textbf{Sensitivity recursions:} Recall the rollout recursion \eqref{eq:mpc_rollout_recursion}:
    \begin{align}
    \tilde{s}_0 &= s_t,
    \qquad
    \tilde{a}_h = \pi_{\psi}(\tilde{s}_h),
    \qquad
    \tilde{s}_{h+1} = f_{\theta}(\tilde{s}_h,\tilde{a}_h,\varepsilon_{t+h}),
    \qquad h=0,\ldots,H-1.
    \label{eq:app_rollout_recursion}
    \end{align}
    Define the state and action sensitivities
    \begin{align}
    G_h := \nabla_{\psi}\tilde{s}_h,
    \qquad
    D_h := \nabla_{\psi}\tilde{a}_h.
    \label{eq:app_def_G_D}
    \end{align}
    Since $\tilde{s}_0=s_t$ does not depend on $\psi$, we have
    \begin{align}
    G_0 = 0.
    \label{eq:app_G0}
    \end{align}
    
    For each $h$, the action is $\tilde{a}_h=\pi_{\psi}(\tilde{s}_h)$. By the chain rule,
    \begin{align}
    D_h
    =
    \nabla_{\psi}\pi_{\psi}(\tilde{s}_h)
    +
    \nabla_s \pi_{\psi}(s)\big|_{s=\tilde{s}_h}\,\nabla_{\psi}\tilde{s}_h.
    \label{eq:app_chain_policy}
    \end{align}
    Introduce the policy Jacobians (as in the theorem statement)
    \begin{align}
    \Pi_s(h) := \nabla_s \pi_{\psi}(s)\big|_{s=\tilde{s}_h},
    \qquad
    \Pi_{\psi}(h) := \nabla_{\psi}\pi_{\psi}(\tilde{s}_h).
    \label{eq:app_def_Pi}
    \end{align}
    Substituting \eqref{eq:app_def_Pi} into \eqref{eq:app_chain_policy} yields
    \begin{align}
    D_h = \Pi_s(h)\,G_h + \Pi_{\psi}(h).
    \label{eq:app_D_recursion}
    \end{align}
    
    For each $h\in\{0,\ldots,H-1\}$, the next state is
    $\tilde{s}_{h+1}=f_{\theta}(\tilde{s}_h,\tilde{a}_h,\varepsilon_{t+h})$.
    Differentiating with respect to $\psi$ and applying the chain rule gives
    \begin{align}
    G_{h+1}
    =
    \nabla_s f_{\theta}(s,a,\varepsilon_{t+h})\big|_{s=\tilde{s}_h,a=\tilde{a}_h}\,G_h
    +
    \nabla_a f_{\theta}(s,a,\varepsilon_{t+h})\big|_{s=\tilde{s}_h,a=\tilde{a}_h}\,D_h.
    \label{eq:app_chain_dynamics}
    \end{align}
    Introduce the world-model Jacobians (as in the theorem statement)
    \begin{align}
    F_s(h) := \nabla_s f_{\theta}(s,a,\varepsilon_{t+h})\big|_{s=\tilde{s}_h,a=\tilde{a}_h},
    \qquad
    F_a(h) := \nabla_a f_{\theta}(s,a,\varepsilon_{t+h})\big|_{s=\tilde{s}_h,a=\tilde{a}_h}.
    \label{eq:app_def_F}
    \end{align}
    Substituting \eqref{eq:app_def_F} into \eqref{eq:app_chain_dynamics} yields
    \begin{align}
    G_{h+1} = F_s(h)\,G_h + F_a(h)\,D_h.
    \label{eq:app_G_recursion}
    \end{align}
    
    Equations \eqref{eq:app_G0}, \eqref{eq:app_D_recursion}, and \eqref{eq:app_G_recursion} are exactly the sensitivity recursions
    stated in Theorem~\ref{thm:grad_recursion_diffusion_mpc}.

    \item \textbf{Gradient of the per-noise return}
    
    Recall the per-noise return definition:
    \begin{align}
    L(\psi;\varepsilon_{t:t+H-1})
    =
    \sum_{h=0}^{H-1}\gamma^h\, r_{\xi}(\tilde{s}_h,\tilde{a}_h)
    +
    \gamma^H\, Q_{\phi}\!\Big(\tilde{s}_H,\pi_{\psi}(\tilde{s}_H)\Big).
    \label{eq:app_def_L}
    \end{align}
    
    Fix $h\in\{0,\ldots,H-1\}$ and define the stage reward
    $r_h := r_{\xi}(\tilde{s}_h,\tilde{a}_h)$.
    By the chain rule,
    \begin{align}
    \nabla_{\psi} r_h
    =
    \nabla_s r_{\xi}(\tilde{s}_h,\tilde{a}_h)\,G_h
    +
    \nabla_a r_{\xi}(\tilde{s}_h,\tilde{a}_h)\,D_h.
    \label{eq:app_grad_stage_reward}
    \end{align}
    Introduce the shorthand
    \begin{align}
    r_s(h) := \nabla_s r_{\xi}(\tilde{s}_h,\tilde{a}_h),
    \qquad
    r_a(h) := \nabla_a r_{\xi}(\tilde{s}_h,\tilde{a}_h).
    \label{eq:app_def_rs_ra}
    \end{align}
    Then \eqref{eq:app_grad_stage_reward} becomes
    \begin{align}
    \nabla_{\psi} r_{\xi}(\tilde{s}_h,\tilde{a}_h)
    =
    r_s(h)\,G_h + r_a(h)\,D_h.
    \label{eq:app_grad_stage_compact}
    \end{align}
    
    Define the terminal action $\tilde{a}_H := \pi_{\psi}(\tilde{s}_H)$ and the terminal value
    $V_T := Q_{\phi}(\tilde{s}_H,\tilde{a}_H)$.
    By the chain rule,
    \begin{align}
    \nabla_{\psi} V_T
    =
    \nabla_s Q_{\phi}(\tilde{s}_H,\tilde{a}_H)\,G_H
    +
    \nabla_a Q_{\phi}(\tilde{s}_H,\tilde{a}_H)\,D_H.
    \label{eq:app_grad_terminal_value}
    \end{align}
    Introduce the terminal derivatives
    \begin{align}
    Q_s := \nabla_s Q_{\phi}(\tilde{s}_H,\tilde{a}_H),
    \qquad
    Q_a := \nabla_a Q_{\phi}(\tilde{s}_H,\tilde{a}_H).
    \label{eq:app_def_Qs_Qa}
    \end{align}
    Then \eqref{eq:app_grad_terminal_value} becomes
    \begin{align}
    \nabla_{\psi} Q_{\phi}(\tilde{s}_H,\tilde{a}_H)
    =
    Q_s\,G_H + Q_a\,D_H.
    \label{eq:app_grad_terminal_compact}
    \end{align}
    Here $D_H$ is defined exactly as in \eqref{eq:app_def_G_D}, and can be expanded via the same policy sensitivity identity
    \eqref{eq:app_D_recursion} with $h=H$:
    \begin{align}
    D_H = \Pi_s(H)\,G_H + \Pi_{\psi}(H).
    \label{eq:app_DH}
    \end{align}
    
    Differentiating \eqref{eq:app_def_L} and applying \eqref{eq:app_grad_stage_compact} and \eqref{eq:app_grad_terminal_compact} gives
    \begin{align}
    \nabla_{\psi} L(\psi;\varepsilon_{t:t+H-1})
    &=
    \sum_{h=0}^{H-1}\gamma^h\,
    \nabla_{\psi} r_{\xi}(\tilde{s}_h,\tilde{a}_h)
    +
    \gamma^H\,
    \nabla_{\psi} Q_{\phi}(\tilde{s}_H,\tilde{a}_H)
    \nonumber\\
    &=
    \sum_{h=0}^{H-1}\gamma^h\,
    \Big(
    r_s(h)\,G_h + r_a(h)\,D_h
    \Big)
    +
    \gamma^H\,
    \Big(
    Q_s\,G_H + Q_a\,D_H
    \Big),
    \label{eq:app_grad_L_final}
    \end{align}
    which is the gradient expression stated in Theorem~\ref{thm:grad_recursion_diffusion_mpc}.

    \item \textbf{Jacobian Recursion:} Next we derive and prove the Jacobian recursions for $\nabla_a f_{\theta}(s,a,\varepsilon)$ and $\nabla_s f_{\theta}(s,a,\varepsilon)$ under the reverse diffusion implementation \eqref{eq:g_h_def}.
    
    Fix $(s,a)$ and a noise pack $\varepsilon=(z_K,z_{K-1},\ldots,z_0)$. Let $\{s^{(k)}\}_{k=0}^{K}$ be generated by
    \begin{align}
    s^{(K)} &= g_K(z_K), \label{eq:app_depth_init}\\
    s^{(k-1)} &= h_k\big(s^{(k)},s,a,z_{k-1}\big),
    \qquad k=K,\ldots,1, \label{eq:app_depth_step}\\
    f_{\theta}(s,a,\varepsilon) &= s^{(0)}. \label{eq:app_depth_out}
    \end{align}
    Assume $g_K$ is independent of $(s,a)$ and each $h_k(u,s,a,z)$ is differentiable in $(u,s,a)$.
    
    Define the diffusion-depth sensitivities
    \begin{align}
    A_k := \nabla_a s^{(k)} \in \mathbb{R}^{d\times m},
    \qquad
    B_k := \nabla_s s^{(k)} \in \mathbb{R}^{d\times d}.
    \label{eq:app_depth_def_AB}
    \end{align}
    
    Since $s^{(K)}=g_K(z_K)$ and $g_K$ does not depend on $(s,a)$,
    \begin{align}
    A_K = 0,
    \qquad
    B_K = 0.
    \label{eq:app_depth_init_AB}
    \end{align}
    
    Fix $k\in\{1,\ldots,K\}$ and write \eqref{eq:app_depth_step} as
    \begin{align}
    s^{(k-1)} = h_k(u,s,a,z_{k-1})\Big|_{u=s^{(k)}}.
    \label{eq:app_depth_step_u}
    \end{align}
    Differentiate \eqref{eq:app_depth_step_u} with respect to $a$ and apply the chain rule:
    \begin{align}
    A_{k-1}
    &=
    \nabla_a s^{(k-1)}
    =
    \frac{\partial h_k}{\partial u}\,\nabla_a s^{(k)}
    +
    \frac{\partial h_k}{\partial a}
    =
    \frac{\partial h_k}{\partial u}\,A_k
    +
    \frac{\partial h_k}{\partial a},
    \label{eq:app_depth_A_rec}
    \end{align}
    where the partial derivatives are evaluated at
    \begin{align}
    (u,s,a,z)=\big(s^{(k)},\,s,\,a,\,z_{k-1}\big).
    \label{eq:app_depth_eval_point}
    \end{align}
    
    Differentiating \eqref{eq:app_depth_step_u} with respect to $s$ gives
    \begin{align}
    B_{k-1}
    &=
    \nabla_s s^{(k-1)}
    =
    \frac{\partial h_k}{\partial u}\,\nabla_s s^{(k)}
    +
    \frac{\partial h_k}{\partial s}
    =
    \frac{\partial h_k}{\partial u}\,B_k
    +
    \frac{\partial h_k}{\partial s},
    \label{eq:app_depth_B_rec}
    \end{align}
    with the same evaluation point \eqref{eq:app_depth_eval_point}.
    
    \paragraph{Iterating to obtain $\nabla_a f_{\theta}$ and $\nabla_s f_{\theta}$.}
    Starting from \eqref{eq:app_depth_init_AB} and iterating \eqref{eq:app_depth_A_rec}--\eqref{eq:app_depth_B_rec} for
    $k=K,K-1,\ldots,1$ yields $A_0$ and $B_0$. Since $f_{\theta}(s,a,\varepsilon)=s^{(0)}$ by \eqref{eq:app_depth_out},
    \begin{align}
    \nabla_a f_{\theta}(s,a,\varepsilon) = \nabla_a s^{(0)} = A_0,
    \qquad
    \nabla_s f_{\theta}(s,a,\varepsilon) = \nabla_s s^{(0)} = B_0.
    \label{eq:app_depth_conclusion}
    \end{align}
\end{enumerate}

\end{proof}

\section{Additional Experimental Details}
\label{sec:exp_details}

\subsection{World Model}
\label{sec:add_world_model}
\textbf{Diffusion Sampler:} We train a conditional DDPM dynamics model to predict \(s_{t+1}\) from \((s_t,a_t)\) on offline D4RL MuJoCo datasets (Hopper, HalfCheetah, and Walker2d, using standard v2 splits). States and next states are standardized with dataset mean/std (with a small \(\epsilon\) for numerical stability), while actions are left unnormalized, and training minimizes the \(\epsilon\)-prediction mean-squared error objective. The denoiser is an MLP with time-embedding dimension \(128\), hidden size \(512\), and \(4\) layers, using \(T=200\) diffusion steps and a linear \(\beta\)-schedule from \(10^{-4}\) to \(2\times10^{-2}\). Optimization uses AdamW (learning rate \(3\times10^{-4}\), weight decay \(0\)), batch size \(2048\), gradient clipping at \(1.0\), exponential moving average (EMA) of parameters with decay \(0.999\) and is trained for \(200\mathrm{k}\) steps.

\textbf{Reward Model:} We train a reward model \(r_\theta(s,a)\) as an MLP on D4RL transitions, where the input is the concatenated state-action vector \([s,a]\) and the output is a scalar immediate reward prediction. The network uses hidden width \(512\) with \(4\) layers total (\(3\) hidden layers plus a final linear output layer). States are normalized using dataset statistics while actions are left unnormalized, and training minimizes MSE with AdamW (learning rate \(3\times10^{-4}\), batch size \(2048\)) for \(200\mathrm{k}\) steps with gradient clipping at \(1.0\). We checkpoint every \(20\mathrm{k}\) steps and run periodic online validation every \(5\mathrm{k}\) steps.

We report the predictive performance of the offline trained diffusion dynamics model and the offline trained reward model across D4RL MuJoCo datasets. Figure~\ref{fig:rmse_diffusion} shows the diffusion model prediction RMSE across training steps, and Figure~\ref{fig:rmse_reward} reports the reward model prediction RMSE across training steps. 
For final performance, Table~\ref{tab:diffusion_reward_rmse_d4rl} reports both diffusion model accuracy, measured by one-step RMSE, and reward model accuracy, measured by reward prediction RMSE, with uncertainty reported as standard error. For the diffusion model, the one-step RMSE is computed using Monte Carlo samples and normalized by the number of transitions. Overall, both models achieve low error on most datasets.

\begin{figure*}[t]
        \includegraphics[width=\textwidth]{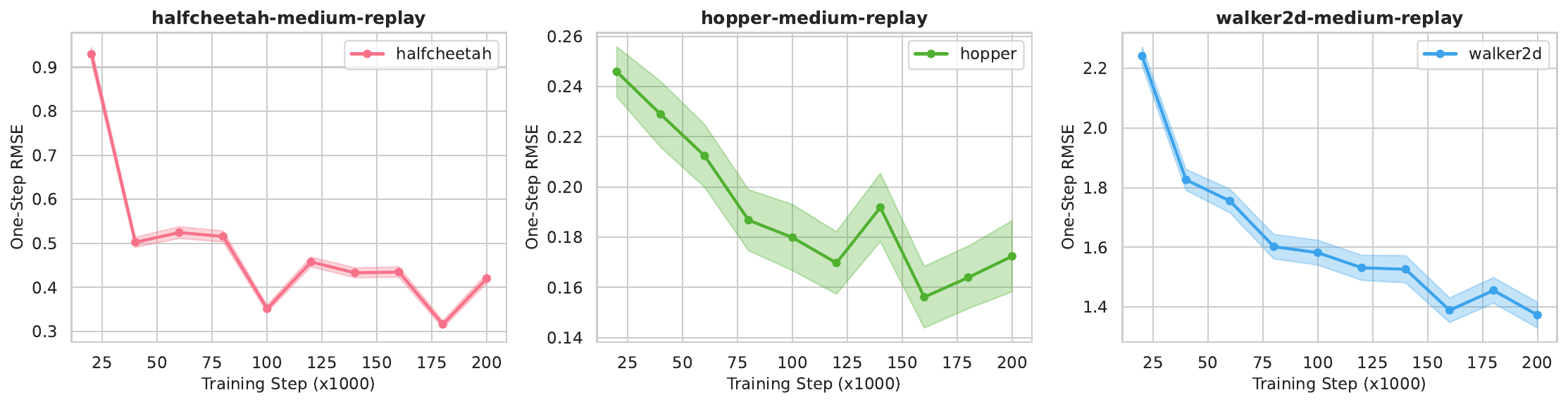}
    \caption{One-step state prediction RMSE of diffusion models across training steps (20k–200k) on medium-replay datasets. RMSE decreases with training for all three environments (halfcheetah, hopper, walker2d), with shaded regions showing standard error over 1000 transitions.} 
    \label{fig:rmse_diffusion}
\end{figure*}

\begin{figure*}[t]
        \includegraphics[width=\textwidth]{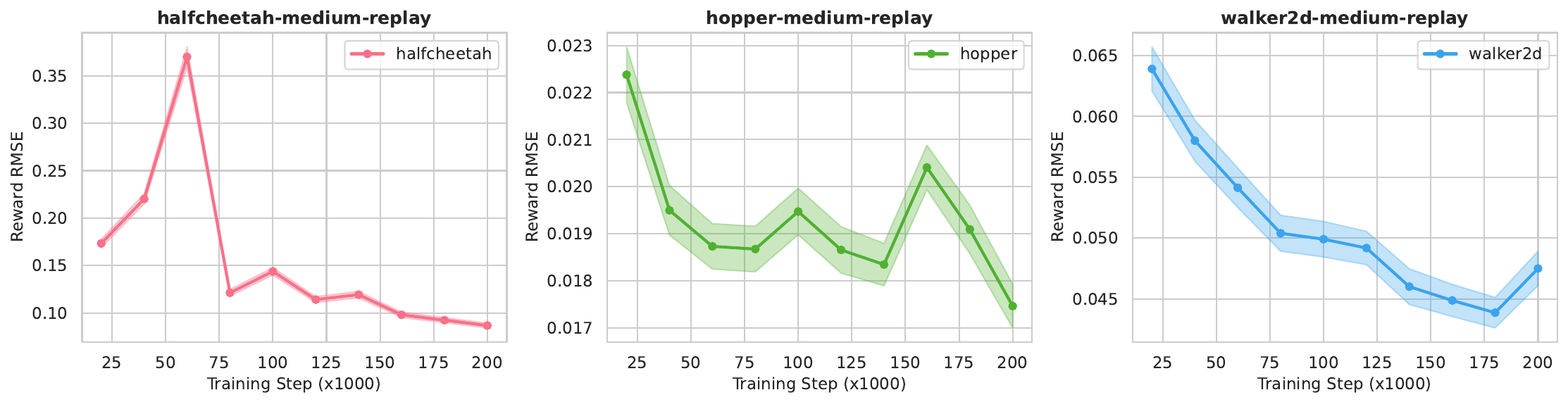}
    \caption{Reward prediction RMSE of reward models across training steps (20k–200k) on medium-replay datasets. All environments show decreasing RMSE with training. Shaded regions indicate standard error over 1000 transitions.} 
    \label{fig:rmse_reward}
\end{figure*}

\begin{table*}[h]
\centering
\begin{tabular}{l c c}
\toprule
Task & Diffusion one-step RMSE ($\pm$ SE) & Reward RMSE ($\pm$ SE) \\
\midrule
halfcheetah-random & 0.2318 $\pm$ 0.000010 & 0.0477 $\pm$ 0.0016 \\
halfcheetah-medium & 0.5494 $\pm$ 0.000014 & 0.1292 $\pm$ 0.0034 \\
halfcheetah-expert & 1.3065 $\pm$ 0.000026 & 0.2763 $\pm$ 0.0076 \\
halfcheetah-medium-expert & 0.5972 $\pm$ 0.000013 & 0.1622 $\pm$ 0.0045 \\
halfcheetah-medium-replay & 0.3996 $\pm$ 0.000014 & 0.0918 $\pm$ 0.0027 \\
halfcheetah-full-replay & 0.4739 $\pm$ 0.000014 & 0.1164 $\pm$ 0.0033 \\
\midrule
hopper-random & 0.1268 $\pm$ 0.004497 & 0.0082 $\pm$ 0.0003 \\
hopper-medium & 0.0784 $\pm$ 0.001281 & 0.0720 $\pm$ 0.0030 \\
hopper-expert & 0.1375 $\pm$ 0.000755 & 0.1415 $\pm$ 0.0058 \\
hopper-medium-expert & 0.1678 $\pm$ 0.000710 & 0.0789 $\pm$ 0.0033 \\
hopper-medium-replay & 0.1008 $\pm$ 0.000764 & 0.0192 $\pm$ 0.0005 \\
hopper-full-replay & 0.1178 $\pm$ 0.002514 & 0.0208 $\pm$ 0.0005 \\
\midrule
walker2d-random & 0.6591 $\pm$ 0.009463 & 0.0244 $\pm$ 0.0008 \\
walker2d-medium & 1.4068 $\pm$ 0.005905 & 0.0779 $\pm$ 0.0020 \\
walker2d-expert & 3.3412 $\pm$ 0.014321 & 0.4902 $\pm$ 0.0123 \\
walker2d-medium-expert & 1.5566 $\pm$ 0.026789 & 0.1060 $\pm$ 0.0031 \\
walker2d-medium-replay & 1.1100 $\pm$ 0.002611 & 0.0459 $\pm$ 0.0013 \\
walker2d-full-replay & 1.2560 $\pm$ 0.009531 & 0.0578 $\pm$ 0.0022 \\
\midrule
Average & 0.7565 $\pm$ 0.000031 & 0.1092 $\pm$ 0.0271 \\
\bottomrule
\end{tabular}
\caption{Diffusion one-step RMSE and reward prediction RMSE (mean $\pm$ standard error) on D4RL MuJoCo datasets.}
\label{tab:diffusion_reward_rmse_d4rl}
\end{table*}

\textbf{Policy and Critic Value:} ReBRAC is a TD3+BC-style offline RL algorithm that uses three-hidden-layer actor and critic networks with hidden dimension $256$ and ReLU activations. The critic applies LayerNorm between hidden layers, and both the actor and critic are trained with behavior-regularization penalties whose coefficients are tuned separately. For D4RL Gym-MuJoCo tasks, ReBRAC uses Adam with batch size $1024$, learning rate $10^{-3}$ for all networks, target update rate $\tau=5\times 10^{-3}$, and discount factor $\gamma=0.99$. The main tuned hyperparameters are the actor and critic regularization coefficients $\beta_1$ and $\beta_2$. We follow the hyperparameter choices from \citet{tarasov2023minimalist}.

\subsection{Inference time Parameters}
The inference time parameters for our method are: Horizon Length ($H$), Inner Gradient Steps ($E$), Step size ($\eta$), and number of particles $M$. We tune them on a single different seed, which is different from the 10 evaluation seeds. We tune the planning horizon $H \in \{0,1,2,5,10\}$, the number of inner gradient steps $E \in \{1,2\}$, and the number of particles $M \in \{2048,4096,8192\}$. For HalfCheetah and Walker2d, we tune the inner step size $\eta \in \{5 \times 10^{-7}, 5 \times 10^{-6}, 5 \times 10^{-5}, 5 \times 10^{-4}\}$ and for Hopper, $\eta \in \{5 \times 10^{-8}, 5 \times 10^{-7}, 5 \times 10^{-6}, 1 \times 10^{-5}, 1 \times 10^{-4}\}$.
We keep the DDIM denoising steps $K = 10$. The choices for each dataset is given below.

\begin{table}[h]
\centering
\caption{Hyperparameters for DiffMPC across D4RL datasets. $H$: horizon length, $E$: inner gradient steps, $\eta$: step size, $M$: number of particles.}
\label{tab:hyperparameters}
\begin{tabular}{l|c|c|c|c}
\toprule
Dataset & $H$ & $E$ & $\eta$ & $M$ \\
\midrule
\multicolumn{5}{c}{\textit{halfcheetah}} \\
\midrule
medium         & 5 & 1 & $5 \times 10^{-4}$ & 8{,}192 \\
expert         & 2 & 1 & $5 \times 10^{-6}$ & 4{,}096 \\
medium-expert  & 5 & 1 & $5 \times 10^{-5}$ & 8{,}192 \\
medium-replay  & 5 & 1 & $5 \times 10^{-4}$ & 8{,}192 \\
full-replay    & 5 & 1 & $5 \times 10^{-4}$ & 4{,}096 \\
\midrule
\multicolumn{5}{c}{\textit{hopper}} \\
\midrule
medium         & 2 & 1 & $5 \times 10^{-5}$ & 8{,}192 \\
expert         & 1 & 1 & $5 \times 10^{-8}$ & 8{,}192 \\
medium-expert  & 2 & 1 & $1 \times 10^{-6}$ & 8{,}192 \\
medium-replay  & 5 & 1 & $1 \times 10^{-4}$ & 4{,}096 \\
full-replay    & 5 & 1 & $1 \times 10^{-4}$ & 4{,}096 \\
\midrule
\multicolumn{5}{c}{\textit{walker2d}} \\
\midrule
medium         & 5  & 1 & $5 \times 10^{-5}$ & 4{,}096 \\
expert         & 5  & 1 & $5 \times 10^{-5}$ & 8{,}192 \\
medium-expert  & 10 & 1 & $5 \times 10^{-5}$ & 8{,}192 \\
medium-replay  & 5  & 1 & $5 \times 10^{-5}$ & 8{,}192 \\
full-replay    & 5  & 1 & $5 \times 10^{-5}$ & 8{,}192 \\
\bottomrule
\end{tabular}
\end{table}

\section{Ablation Studies}
\label{sec:ablations}
\subsection{Horizon Length}

We conduct ablation studies to understand the effect of the planning horizon $H$ on the performance of $\DiffMPC$. Recall that $H$ controls the number of steps over which imagined rollouts are unrolled before the terminal critic $Q_{\phi}$ is applied.

\textbf{Effect of Planning Horizon $H$.} We sweep $H \in \{1, 2, 5, 10, 20\}$ across all 18 Gym-MuJoCo datasets and report normalized scores in Tables~\ref{tab:h_ablation_halfcheetah}--\ref{tab:h_ablation_walker2d}.

Several consistent patterns emerge across environments and datasets. First, even a short horizon of $H=1$ or $H=2$ is sometimes sufficient to recover the gains from inference-time adaptation, suggesting that the immediate one-step gradient signal through the world model provides useful policy improvement. This is particularly visible in the Walker2d family of tasks, where normalized scores improve monotonically or near-monotonically with $H$ (e.g., walker2d-expert improves from $112.94$ at $H=1$ to $113.74$ at $H=20$, and walker2d-medium-expert from $111.71$ to $112.77$).

Second, moderate horizons ($H=5$ or $H=10$) tend to perform well across most datasets without incurring instability. Longer horizons ($H=20$) occasionally exhibit higher variance or slight degradation, likely due to compounding model errors over longer imagined rollouts. This is most apparent in the hopper-expert task, where the standard deviation grows substantially at $H=20$ ($94.03 \pm 18.46$) compared to shorter horizons.

Third, on tasks where the pre-trained ReBRAC policy is already near-optimal or the dataset is uninformative (e.g., random datasets such as halfcheetah-random, hopper-random, walker2d-random), inference-time adaptation provides little to no gain across all horizons, consistent with our main results where we observed no improvement on these tasks.

Fourth, on full-replay and medium-replay datasets, where the offline data covers a wider range of behaviors, the gains from increasing $H$ tend to be more stable, as the world model trained on richer data yields more reliable multi-step predictions.

Overall, the ablation confirms that inference-time adaptation is robust across a broad range of horizon choices, with $H \in \{5, 10\}$ offering a good balance between performance and computational cost. Since each additional horizon step requires an extra forward pass through the diffusion sampler and reward model, as well as additional backpropagation through the computation graph, practitioners operating under strict latency constraints may prefer smaller $H$ with minimal loss.

\begin{table}[!htbp]
\centering
\small
\setlength{\tabcolsep}{6pt}
\renewcommand{\arraystretch}{1.05}
\resizebox{\linewidth}{!}{%
\begin{tabular}{l|ccccc}
\toprule
Dataset & $H=1$ & $H=2$ & $H=5$ & $H=10$ & $H=20$ \\
\midrule
random        & $33.66 \pm 0.19$  & $33.60 \pm 0.32$  & $33.76 \pm 0.19$  & $\mathbf{34.39} \pm 0.23$  & $34.23 \pm 0.34$ \\
medium        & $68.27 \pm 0.58$  & $65.52 \pm 8.85$  & $\mathbf{68.97} \pm 0.68$  & $68.88 \pm 0.66$  & $68.77 \pm 0.73$ \\
medium-replay & $51.13 \pm 0.85$  & $51.03 \pm 0.56$  & $\mathbf{51.68} \pm 0.66$  & $51.43 \pm 0.94$  & $51.48 \pm 0.67$ \\
medium-expert & $\mathbf{106.68} \pm 0.84$ & $106.13 \pm 1.31$ & $102.36 \pm 10.22$ & $105.79 \pm 0.62$ & $106.05 \pm 0.43$ \\
expert        & $105.54 \pm 0.53$ & $104.87 \pm 1.84$ & $106.06 \pm 0.36$ & $105.80 \pm 0.46$ & $\mathbf{106.67} \pm 0.36$ \\
full-replay   & $83.78 \pm 0.36$  & $84.40 \pm 0.30$  & $\mathbf{85.15} \pm 0.41$  & $84.27 \pm 0.49$  & $82.51 \pm 0.48$ \\
\bottomrule
\end{tabular}%
}
\vspace{1ex}
\caption{Effect of planning horizon $H$ on $\DiffMPC$ for HalfCheetah datasets. Normalized scores (mean $\pm$ std) on D4RL. Best score per row is in bold.}
\label{tab:h_ablation_halfcheetah}
\end{table}

\begin{table}[!htbp]
\centering
\small
\setlength{\tabcolsep}{6pt}
\renewcommand{\arraystretch}{1.05}
\resizebox{\linewidth}{!}{%
\begin{tabular}{l|ccccc}
\toprule
Dataset & $H=1$ & $H=2$ & $H=5$ & $H=10$ & $H=20$ \\
\midrule
random        & $9.01 \pm 1.14$    & $8.87 \pm 1.27$    & $8.99 \pm 1.12$   & $9.04 \pm 1.22$   & $\mathbf{9.48} \pm 1.22$ \\
medium-replay & $101.68 \pm 0.44$  & $101.83 \pm 0.31$  & $101.68 \pm 0.41$ & $93.44 \pm 24.63$ & $\mathbf{101.96} \pm 0.42$ \\
medium-expert & $112.47 \pm 1.25$  & $106.48 \pm 13.37$ & $112.99 \pm 1.07$ & $\mathbf{113.54} \pm 1.14$ & $109.38 \pm 12.93$ \\
full-replay   & $108.01 \pm 0.09$  & $108.02 \pm 0.13$  & $108.06 \pm 0.12$ & $108.38 \pm 0.10$ & $\mathbf{108.61} \pm 0.09$ \\
expert        & $107.70 \pm 13.12$ & $\mathbf{110.36} \pm 9.51$ & $101.86 \pm 17.29$ & $98.34 \pm 19.36$ & $94.03 \pm 18.46$ \\
\bottomrule
\end{tabular}%
}
\vspace{1ex}
\caption{Effect of planning horizon $H$ on $\DiffMPC$ for Hopper datasets. Normalized scores (mean $\pm$ std) on D4RL. Best score per row is in bold.}
\label{tab:h_ablation_hopper}
\end{table}

\begin{table}[!htbp]
\centering
\small
\setlength{\tabcolsep}{6pt}
\renewcommand{\arraystretch}{1.05}
\resizebox{\linewidth}{!}{%
\begin{tabular}{l|ccccc}
\toprule
Dataset & $H=1$ & $H=2$ & $H=5$ & $H=10$ & $H=20$ \\
\midrule
random        & $\mathbf{21.96} \pm 0.01$ & $\mathbf{21.96} \pm 0.02$ & $\mathbf{21.96} \pm 0.01$ & $\mathbf{21.96} \pm 0.02$ & $\mathbf{21.96} \pm 0.02$ \\
medium        & $85.75 \pm 0.60$  & $81.83 \pm 12.87$ & $83.01 \pm 10.20$ & $\mathbf{86.52} \pm 0.45$  & $84.98 \pm 5.96$ \\
medium-replay & $76.46 \pm 22.93$ & $86.36 \pm 7.68$  & $76.35 \pm 25.69$ & $\mathbf{87.99} \pm 3.86$  & $78.86 \pm 18.54$ \\
medium-expert & $111.71 \pm 0.11$ & $111.76 \pm 0.09$ & $112.04 \pm 0.07$ & $112.30 \pm 0.08$ & $\mathbf{112.77} \pm 0.06$ \\
full-replay   & $104.00 \pm 0.34$ & $104.54 \pm 0.76$ & $104.50 \pm 0.65$ & $\mathbf{104.74} \pm 0.54$ & $104.20 \pm 0.29$ \\
expert        & $112.94 \pm 0.05$ & $112.93 \pm 0.08$ & $113.15 \pm 0.04$ & $113.41 \pm 0.07$ & $\mathbf{113.74} \pm 0.08$ \\
\bottomrule
\end{tabular}%
}
\vspace{1ex}
\caption{Effect of planning horizon $H$ on $\DiffMPC$ for Walker2d datasets. Normalized scores (mean $\pm$ std) on D4RL. Best score per row is in bold.}
\label{tab:h_ablation_walker2d}
\end{table}

\subsection{Effect of Inner Gradient Steps}

We next study the effect of the number of inner gradient steps $E$ taken on $\widehat{J}_t(\psi)$ at each real environment state, before executing the next action. Larger $E$ allows more aggressive policy adaptation per step, but also increases the computational cost linearly and risks overfitting the policy to the imagined trajectories of a single state $s_t$. We sweep $E \in \{1, 2, 3, 4, 5\}$. Results are reported in Tables~\ref{tab:e_ablation_halfcheetah}--\ref{tab:e_ablation_walker2d}.

The ablation reveals a clear environment-dependent pattern. On the Walker2d tasks, normalized scores improve monotonically with $E$ in nearly every dataset (e.g., walker2d-expert from $112.99$ at $E=1$ to $114.59$ at $E=5$, walker2d-medium-expert from $111.76$ to $113.02$, walker2d-medium from $82.27$ to $87.56$). This indicates that the policy benefits from repeated gradient refinement on the same imagined rollout objective, and that the world model is sufficiently accurate on Walker2d that additional inner steps do not introduce harmful drift. A similar but milder trend appears on the HalfCheetah full-replay and expert datasets.

In contrast, the Hopper environment exhibits markedly different behavior. On hopper-medium-replay, performance degrades sharply and monotonically as $E$ increases, falling from $101.92$ at $E=1$ to $52.78$ at $E=5$, with standard deviations growing to nearly $40$. A similar instability appears on hopper-expert, where scores are highest at $E=1$ and become noisier at larger $E$. We attribute this to the interaction between repeated policy optimization and Hopper's termination-sensitive locomotion dynamics: Hopper episodes terminate when the agent becomes unhealthy, i.e., when the hopper falls or topples over. Thus, small policy-induced trajectory deviations can produce large return changes. More broadly, repeated optimization against imperfect offline model/value estimates can amplify estimation error and move the policy toward out-of-distribution actions, which is a central failure mode in offline RL \citep{kumar2020cql,brandfonbrener2021offline}.

On HalfCheetah-random, increasing $E$ also leads to a steady decline in performance ($33.58 \to 29.75$), which is consistent with the observation from the $H$ ablation that random datasets do not benefit from inference-time adaptation: in the absence of useful policy improvement signal, additional gradient steps simply move the policy away from a reasonable initialization.

Overall, $E=1$ is a safe default that captures most of the benefit at minimal cost on every environment, while $E \in \{3, 4, 5\}$ can yield additional gains on stable environments such as Walker2d and on some HalfCheetah datasets. These results suggest that the optimal $E$ is environment-specific and should be tuned with awareness of the underlying dynamics' sensitivity to policy perturbations.

\begin{table}[!htbp]
\centering
\small
\setlength{\tabcolsep}{6pt}
\renewcommand{\arraystretch}{1.05}
\resizebox{\linewidth}{!}{%
\begin{tabular}{l|ccccc}
\toprule
Dataset & $E=1$ & $E=2$ & $E=3$ & $E=4$ & $E=5$ \\
\midrule
random        & $\mathbf{33.58} \pm 0.16$ & $32.44 \pm 0.23$ & $31.14 \pm 0.17$ & $30.33 \pm 0.17$ & $29.75 \pm 0.21$ \\
medium        & $\mathbf{68.89} \pm 0.90$ & $68.74 \pm 0.61$ & $68.51 \pm 0.61$ & $68.34 \pm 0.64$ & $67.98 \pm 0.87$ \\
medium-replay & $51.20 \pm 0.57$ & $52.22 \pm 0.76$ & $52.14 \pm 0.73$ & $49.99 \pm 5.87$ & $\mathbf{52.87} \pm 0.61$ \\
medium-expert & $106.43 \pm 0.49$ & $\mathbf{106.79} \pm 0.81$ & $106.40 \pm 0.72$ & $106.40 \pm 0.35$ & $106.57 \pm 0.35$ \\
expert        & $105.38 \pm 0.67$ & $105.41 \pm 0.87$ & $105.63 \pm 0.86$ & $\mathbf{106.00} \pm 0.41$ & $105.65 \pm 0.54$ \\
full-replay   & $84.28 \pm 0.53$ & $85.23 \pm 0.29$ & $85.80 \pm 0.64$ & $85.91 \pm 0.47$ & $\mathbf{86.00} \pm 0.27$ \\
\bottomrule
\end{tabular}%
}
\vspace{1ex}
\caption{Effect of inner gradient steps $E$ on $\DiffMPC$ for HalfCheetah datasets. Normalized scores (mean $\pm$ std) on D4RL. Best score per row is in bold.}
\label{tab:e_ablation_halfcheetah}
\end{table}

\begin{table}[!htbp]
\centering
\small
\setlength{\tabcolsep}{6pt}
\renewcommand{\arraystretch}{1.05}
\resizebox{\linewidth}{!}{%
\begin{tabular}{l|ccccc}
\toprule
Dataset & $E=1$ & $E=2$ & $E=3$ & $E=4$ & $E=5$ \\
\midrule
random        & $8.69 \pm 1.04$   & $8.85 \pm 1.22$   & $9.28 \pm 1.25$   & $8.69 \pm 1.11$   & $\mathbf{9.72} \pm 1.23$ \\
medium-replay & $\mathbf{101.92} \pm 0.28$ & $93.32 \pm 24.50$ & $77.18 \pm 37.39$ & $62.08 \pm 39.72$ & $52.78 \pm 40.01$ \\
medium-expert & $110.52 \pm 3.70$ & $113.20 \pm 1.24$ & $106.48 \pm 20.95$ & $107.89 \pm 16.97$ & $\mathbf{113.38} \pm 0.88$ \\
full-replay   & $108.10 \pm 0.12$ & $108.09 \pm 0.11$ & $108.24 \pm 0.09$ & $108.25 \pm 0.14$ & $\mathbf{108.35} \pm 0.10$ \\
expert        & $\mathbf{109.74} \pm 6.38$ & $95.33 \pm 17.45$ & $99.35 \pm 19.71$ & $106.80 \pm 13.17$ & $98.08 \pm 18.41$ \\
\bottomrule
\end{tabular}%
}
\vspace{1ex}
\caption{Effect of inner gradient steps $E$ on $\DiffMPC$ for Hopper datasets. Normalized scores (mean $\pm$ std) on D4RL. Best score per row is in bold.}
\label{tab:e_ablation_hopper}
\end{table}

\begin{table}[!htbp]
\centering
\small
\setlength{\tabcolsep}{6pt}
\renewcommand{\arraystretch}{1.05}
\resizebox{\linewidth}{!}{%
\begin{tabular}{l|ccccc}
\toprule
Dataset & $E=1$ & $E=2$ & $E=3$ & $E=4$ & $E=5$ \\
\midrule
random        & $\mathbf{21.97} \pm 0.01$ & $21.96 \pm 0.01$ & $21.96 \pm 0.02$ & $21.96 \pm 0.01$ & $21.96 \pm 0.02$ \\
medium        & $82.27 \pm 13.94$ & $86.05 \pm 0.32$  & $86.93 \pm 0.35$  & $87.35 \pm 0.31$  & $\mathbf{87.56} \pm 0.33$ \\
medium-replay & $\mathbf{89.27} \pm 1.74$ & $80.26 \pm 25.37$ & $85.85 \pm 11.75$ & $66.77 \pm 37.12$ & $81.34 \pm 26.05$ \\
medium-expert & $111.76 \pm 0.07$ & $112.21 \pm 0.06$ & $112.57 \pm 0.07$ & $112.81 \pm 0.07$ & $\mathbf{113.02} \pm 0.07$ \\
full-replay   & $104.18 \pm 0.88$ & $104.41 \pm 0.19$ & $104.79 \pm 0.36$ & $\mathbf{105.24} \pm 0.50$ & $\mathbf{105.24} \pm 0.54$ \\
expert        & $112.99 \pm 0.11$ & $113.50 \pm 0.08$ & $113.85 \pm 0.09$ & $114.19 \pm 0.12$ & $\mathbf{114.59} \pm 0.16$ \\
\bottomrule
\end{tabular}%
}
\vspace{1ex}
\caption{Effect of inner gradient steps $E$ on $\DiffMPC$ for Walker2d datasets. Normalized scores (mean $\pm$ std) on D4RL. Best score per row is in bold.}
\label{tab:e_ablation_walker2d}
\end{table}

\subsection{Effect of Inner Step Size}

We next study the effect of the inner-loop step size $\eta$ used in the gradient updates $\psi \leftarrow \psi + \eta \nabla_{\psi}\widehat{J}_t(\psi)$. A larger $\eta$ allows the policy to move further per inner step at inference time, which can accelerate adaptation but also risks overshooting and pushing the policy into regions where the world model becomes unreliable. We sweep $\eta \in \{5\!\times\!10^{-7}\!, 5\!\times\!10^{-6}\!, 5\!\times\!10^{-5}\}$. Results are reported in Tables~\ref{tab:lr_ablation_halfcheetah}--\ref{tab:lr_ablation_walker2d}.

The ablation reveals a strong environment-dependent pattern that mirrors what we observed for the inner gradient steps $E$. On the Walker2d tasks, performance improves nearly monotonically with larger $\eta$ across every non-random dataset (e.g., walker2d-expert from $112.49$ at $\eta = 5\!\times\!10^{-7}$ to $115.40$ at $\eta = 5\!\times\!10^{-5}$, walker2d-medium-replay from $59.48$ to $79.00$, walker2d-medium-expert from $111.42$ to $113.17$). This indicates that the Walker2d world model is sufficiently accurate that larger gradient steps translate directly into better policies without inducing harmful drift.

On HalfCheetah, the picture is more mixed. Datasets with broad coverage (full-replay, medium-replay) benefit from the largest step size, while expert and random datasets degrade as $\eta$ grows. The most extreme case is halfcheetah-random, where the score drops from $34.31$ at $\eta = 5\!\times\!10^{-7}$ to $27.60$ at $\eta = 5\!\times\!10^{-5}$. Halfcheetah-expert exhibits a similar monotonic decrease ($106.70 \to 103.54$), with the largest $\eta$ also introducing substantial variance. 

The Hopper environment again shows the most pronounced instability under aggressive adaptation. On hopper-medium-replay, the largest step size collapses performance from $101.75$ at $\eta = 5\!\times\!10^{-6}$ to $61.12$ at $\eta = 5\!\times\!10^{-5}$, with the standard deviation jumping above $40$. This parallels the failure mode observed in the $E$ ablation on the same dataset, and reinforces the interpretation that Hopper's narrow stable manifold makes it unusually sensitive to large policy perturbations.

Overall, the optimal $\eta$ is environment-specific: $\eta = 5\!\times\!10^{-5}$ is preferable on Walker2d and on HalfCheetah datasets with broad coverage, while smaller step sizes around $\eta = 5\!\times\!10^{-7}$ to $5\!\times\!10^{-6}$ are safer on Hopper and on near-optimal policies. A practical default of $\eta = 5\!\times\!10^{-6}$ provides a reasonable trade-off, performing close to the best on most datasets while avoiding the catastrophic instability seen at the largest step size on Hopper.

\begin{table}[!htbp]
\centering
\small
\setlength{\tabcolsep}{8pt}
\renewcommand{\arraystretch}{1.05}
\begin{tabular}{l|ccc}
\toprule
Dataset & $\eta = 5\!\times\!10^{-7}$ & $\eta = 5\!\times\!10^{-6}$ & $\eta = 5\!\times\!10^{-5}$ \\
\midrule
random        & $\mathbf{34.31} \pm 0.09$  & $33.60 \pm 0.19$  & $27.60 \pm 0.16$ \\
medium        & $68.65 \pm 1.03$  & $\mathbf{69.03} \pm 0.81$  & $66.55 \pm 0.60$ \\
medium-replay & $50.96 \pm 0.74$  & $51.38 \pm 0.85$  & $\mathbf{52.53} \pm 0.62$ \\
medium-expert & $106.05 \pm 0.25$ & $105.81 \pm 2.41$ & $\mathbf{106.23} \pm 0.26$ \\
expert        & $\mathbf{106.70} \pm 0.34$ & $105.66 \pm 0.41$ & $103.54 \pm 8.21$ \\
full-replay   & $82.46 \pm 0.47$  & $84.31 \pm 0.39$  & $\mathbf{86.04} \pm 0.34$ \\
\bottomrule
\end{tabular}
\vspace{1ex}
\caption{Effect of inner step size $\eta$ on $\DiffMPC$ for HalfCheetah datasets. Normalized scores (mean $\pm$ std) on D4RL. Best score per row is in bold.}
\label{tab:lr_ablation_halfcheetah}
\end{table}

\begin{table}[!htbp]
\centering
\small
\setlength{\tabcolsep}{8pt}
\renewcommand{\arraystretch}{1.05}
\begin{tabular}{l|ccc}
\toprule
Dataset & $\eta = 5\!\times\!10^{-7}$ & $\eta = 5\!\times\!10^{-6}$ & $\eta = 5\!\times\!10^{-5}$ \\
\midrule
random        & $\mathbf{9.00} \pm 1.24$    & $8.45 \pm 0.73$   & $8.40 \pm 0.69$ \\
medium-replay & $100.35 \pm 3.65$  & $\mathbf{101.75} \pm 0.52$ & $61.12 \pm 41.11$ \\
medium-expert & $\mathbf{112.54} \pm 1.08$ & $112.14 \pm 0.90$ & $112.35 \pm 0.46$ \\
full-replay   & $108.13 \pm 0.13$  & $107.98 \pm 0.10$ & $\mathbf{108.46} \pm 0.08$ \\
expert        & $99.50 \pm 20.18$  & $98.88 \pm 18.66$ & $\mathbf{100.53} \pm 18.66$ \\
\bottomrule
\end{tabular}
\vspace{1ex}
\caption{Effect of inner step size $\eta$ on $\DiffMPC$ for Hopper datasets. Normalized scores (mean $\pm$ std) on D4RL. Best score per row is in bold.}
\label{tab:lr_ablation_hopper}
\end{table}

\begin{table}[!htbp]
\centering
\small
\setlength{\tabcolsep}{8pt}
\renewcommand{\arraystretch}{1.05}
\begin{tabular}{l|ccc}
\toprule
Dataset & $\eta = 5\!\times\!10^{-7}$ & $\eta = 5\!\times\!10^{-6}$ & $\eta = 5\!\times\!10^{-5}$ \\
\midrule
random        & $\mathbf{21.96} \pm 0.02$ & $\mathbf{21.96} \pm 0.01$ & $\mathbf{21.96} \pm 0.01$ \\
medium        & $77.06 \pm 17.66$ & $86.01 \pm 0.46$  & $\mathbf{87.99} \pm 0.95$ \\
medium-replay & $59.48 \pm 32.73$ & $71.06 \pm 29.41$ & $\mathbf{79.00} \pm 27.66$ \\
medium-expert & $111.42 \pm 0.12$ & $111.74 \pm 0.08$ & $\mathbf{113.17} \pm 0.12$ \\
full-replay   & $103.97 \pm 0.51$ & $104.60 \pm 1.39$ & $\mathbf{105.94} \pm 0.71$ \\
expert        & $112.49 \pm 0.06$ & $112.97 \pm 0.08$ & $\mathbf{115.40} \pm 0.13$ \\
\bottomrule
\end{tabular}
\vspace{1ex}
\caption{Effect of inner step size $\eta$ on $\DiffMPC$ for Walker2d datasets. Normalized scores (mean $\pm$ std) on D4RL. Best score per row is in bold.}
\label{tab:lr_ablation_walker2d}
\end{table}

\subsection{Effect of DDIM Sampling Steps}

We next study the effect of the number of DDIM sampling steps $K$ used when unrolling the diffusion world model $f_{\theta}$ at inference time. Recall from Section~\ref{sec:worldModel} that each call to $f_{\theta}$ requires $K$ sequential reverse-diffusion steps to produce a single next-state sample. Larger $K$ produces samples that more faithfully match the trained conditional distribution $p_{\theta}(s_{t+1}\mid s_t,a_t)$, but linearly increases both the cost of each forward rollout and the depth of the computation graph through which gradients are backpropagated. We sweep $K \in \{1, 2, 5, 10, 20\}$ across all 18 Gym-MuJoCo datasets, with the remaining hyperparameters fixed at their best per-dataset values from the previous ablations. Results are reported in Tables~\ref{tab:ddim_ablation_halfcheetah}--\ref{tab:ddim_ablation_walker2d}.

The picture that emerges from these tables depends on the dataset. On HalfCheetah, the choice of $K$ has a small effect on most datasets: halfcheetah-medium ranges between $68.67$ and $69.12$, and halfcheetah-medium-replay between $51.19$ and $51.56$, with differences on the order of one standard deviation. The Walker2d expert and medium-expert tasks behave similarly, varying by only a few tenths of a point across the full range of $K$. On these datasets, a small number of denoising steps appears to be enough for inference-time optimization, although larger $K$ can still help at the margin.

Other datasets benefit from larger $K$. Halfcheetah-full-replay improves from $82.90$ at $K=1$ to $84.46$ at $K=20$, and hopper-full-replay and walker2d-full-replay show the same trend. We attribute this to the fact that broader-coverage datasets train more expressive diffusion models whose multi-step denoising procedure captures finer-grained dynamics that a one-step sample may miss.

Walker2d-medium and walker2d-medium-replay go the other way and degrade with larger $K$. On walker2d-medium, scores fall from $86.09$ at $K=1$ to $81.04$ at $K=20$, with growing variance. We hypothesize that the deeper diffusion computation graph at large $K$ produces noisier or less stable gradient signals during the policy update, partially offsetting the benefits of more accurate samples.

Hopper shows the largest swings across $K$, particularly on hopper-medium-expert and hopper-expert, where individual $K$ values can shift by several points with large standard deviations. This matches our earlier observations that Hopper is unusually sensitive to perturbations in the inference-time pipeline, and it suggests that $K$ needs to be tuned more carefully here than elsewhere.

Overall, $K \in \{1, 2\}$ is competitive on many datasets at much lower cost, while  $K = 10$ tends to work better on the full-replay datasets where extra sampling fidelity helps. This trade-off motivates the one-step MeanFlow sampler in Section~\ref{sec:flowmpc}: where extra denoising steps give diminishing returns, distilling the diffusion process into a single evaluation recovers most of the benefit at a fraction of the cost, and we can fall back to multi-step sampling on the datasets where it actually pays off.

\begin{table}[!htbp]
\centering
\small
\setlength{\tabcolsep}{6pt}
\renewcommand{\arraystretch}{1.05}
\resizebox{\linewidth}{!}{%
\begin{tabular}{l|ccccc}
\toprule
Dataset & $K=1$ & $K=2$ & $K=5$ & $K=10$ & $K=20$ \\
\midrule
random        & $\mathbf{34.02} \pm 0.20$ & $33.92 \pm 0.17$ & $33.57 \pm 0.22$ & $33.60 \pm 0.20$ & $33.57 \pm 0.08$ \\
medium        & $68.74 \pm 0.56$ & $68.67 \pm 0.81$ & $68.76 \pm 0.49$ & $\mathbf{69.12} \pm 0.66$ & $68.70 \pm 0.56$ \\
medium-replay & $51.41 \pm 0.40$ & $51.19 \pm 0.84$ & $\mathbf{51.56} \pm 0.44$ & $\mathbf{51.56} \pm 0.97$ & $51.21 \pm 0.96$ \\
medium-expert & $105.61 \pm 0.18$ & $105.75 \pm 0.33$ & $106.03 \pm 0.84$ & $104.65 \pm 5.25$ & $\mathbf{106.53} \pm 0.58$ \\
expert        & $104.49 \pm 0.19$ & $104.58 \pm 0.37$ & $\mathbf{105.53} \pm 0.62$ & $105.19 \pm 0.70$ & $104.67 \pm 2.24$ \\
full-replay   & $82.90 \pm 0.42$ & $82.87 \pm 0.47$ & $84.31 \pm 0.34$ & $84.35 \pm 0.38$ & $\mathbf{84.46} \pm 0.28$ \\
\bottomrule
\end{tabular}%
}
\vspace{1ex}
\caption{Effect of DDIM sampling steps $K$ on $\DiffMPC$ for HalfCheetah datasets. Normalized scores (mean $\pm$ std) on D4RL. Best score per row is in bold.}
\label{tab:ddim_ablation_halfcheetah}
\end{table}

\begin{table}[!htbp]
\centering
\small
\setlength{\tabcolsep}{6pt}
\renewcommand{\arraystretch}{1.05}
\resizebox{\linewidth}{!}{%
\begin{tabular}{l|ccccc}
\toprule
Dataset & $K=1$ & $K=2$ & $K=5$ & $K=10$ & $K=20$ \\
\midrule
random        & $8.58 \pm 1.09$   & $\mathbf{9.32} \pm 1.22$  & $8.60 \pm 1.03$   & $9.21 \pm 1.26$   & $8.37 \pm 0.82$ \\
medium-replay & $101.75 \pm 0.45$ & $85.73 \pm 33.10$ & $101.48 \pm 0.51$ & $101.85 \pm 0.31$ & $\mathbf{102.05} \pm 0.45$ \\
medium-expert & $113.22 \pm 2.39$ & $108.44 \pm 14.94$ & $111.06 \pm 4.33$ & $108.92 \pm 14.62$ & $\mathbf{113.66} \pm 0.91$ \\
full-replay   & $107.92 \pm 0.17$ & $107.87 \pm 0.20$ & $108.04 \pm 0.17$ & $108.00 \pm 0.09$ & $\mathbf{108.09} \pm 0.12$ \\
expert        & $\mathbf{111.19} \pm 5.36$ & $99.62 \pm 21.08$ & $106.35 \pm 12.65$ & $105.80 \pm 11.82$ & $100.79 \pm 12.62$ \\
\bottomrule
\end{tabular}%
}
\vspace{1ex}
\caption{Effect of DDIM sampling steps $K$ on $\DiffMPC$ for Hopper datasets. Normalized scores (mean $\pm$ std) on D4RL. Best score per row is in bold.}
\label{tab:ddim_ablation_hopper}
\end{table}

\begin{table}[!htbp]
\centering
\small
\setlength{\tabcolsep}{6pt}
\renewcommand{\arraystretch}{1.05}
\resizebox{\linewidth}{!}{%
\begin{tabular}{l|ccccc}
\toprule
Dataset & $K=1$ & $K=2$ & $K=5$ & $K=10$ & $K=20$ \\
\midrule
random        & $21.96 \pm 0.01$  & $21.96 \pm 0.01$  & $21.96 \pm 0.02$  & $\mathbf{21.97} \pm 0.01$  & $21.96 \pm 0.02$ \\
medium        & $\mathbf{86.09} \pm 1.13$  & $85.52 \pm 0.99$  & $85.24 \pm 5.28$  & $81.95 \pm 12.75$ & $81.04 \pm 15.06$ \\
medium-replay & $71.60 \pm 32.72$ & $84.44 \pm 5.33$  & $\mathbf{89.03} \pm 2.19$  & $78.78 \pm 22.03$ & $73.33 \pm 19.42$ \\
medium-expert & $111.71 \pm 0.07$ & $111.75 \pm 0.07$ & $111.74 \pm 0.06$ & $\mathbf{111.77} \pm 0.11$ & $111.72 \pm 0.10$ \\
full-replay   & $103.57 \pm 0.35$ & $103.60 \pm 0.32$ & $104.24 \pm 0.76$ & $104.10 \pm 0.57$ & $\mathbf{104.40} \pm 0.87$ \\
expert        & $112.71 \pm 0.07$ & $112.71 \pm 0.08$ & $112.95 \pm 0.05$ & $112.97 \pm 0.07$ & $\mathbf{112.98} \pm 0.11$ \\
\bottomrule
\end{tabular}%
}
\vspace{1ex}
\caption{Effect of DDIM sampling steps $K$ on $\DiffMPC$ for Walker2d datasets. Normalized scores (mean $\pm$ std) on D4RL. Best score per row is in bold.}
\label{tab:ddim_ablation_walker2d}
\end{table}

\subsection{Effect of Number of Particles}

We finally study the effect of the number of Monte Carlo particles $M$ used to estimate the MPC objective $\widehat{J}_t(\psi)$ at inference time. Recall that $M$ controls the number of i.i.d.\ noise sequences over which we average the imagined-rollout return; larger $M$ reduces the variance of the gradient estimator. Since the $M$ particles are independent and processed in parallel, the wall-clock cost of increasing $M$ is largely absorbed by hardware parallelism, and the practically relevant question is not whether larger $M$ is faster but whether $\DiffMPC$ is robust across reasonable choices. We sweep $M \in \{512, 1024, 4096\}$ across all 18 Gym-MuJoCo datasets, with the remaining hyperparameters fixed at their best per-dataset values from the previous ablations. Results are reported in Tables~\ref{tab:M_ablation_halfcheetah}--\ref{tab:M_ablation_walker2d}.

The ablation confirms that $\DiffMPC$ is stable across this range of $M$. On nearly every dataset, the difference between the smallest and largest $M$ is well within the noise of a single run: HalfCheetah scores vary by less than one point across the entire range on every dataset, and the Walker2d expert and medium-expert tasks differ by less than $0.5$ points across all three $M$ values. This indicates that the gradient estimator is well-behaved at $M = 512$ and that further variance reduction provides only marginal change to the resulting policy.


The takeaway is that $M$ is not a parameter that requires careful tuning within this range. Any choice in $\{512, 1024, 4096\}$ produces qualitatively similar performance, and practitioners can default to whatever value fits comfortably within their available parallel compute budget without expecting significant changes to the final policy.

\begin{table}[!htbp]
\centering
\small
\setlength{\tabcolsep}{8pt}
\renewcommand{\arraystretch}{1.05}
\begin{tabular}{l|ccc}
\toprule
Dataset & $M=512$ & $M=1024$ & $M=4096$ \\
\midrule
random        & $\mathbf{34.27} \pm 0.16$ & $34.22 \pm 0.31$ & $33.61 \pm 0.14$ \\
medium        & $\mathbf{68.89} \pm 0.64$ & $68.72 \pm 0.56$ & $68.50 \pm 0.75$ \\
medium-replay & $51.21 \pm 0.63$ & $51.19 \pm 0.63$ & $\mathbf{51.69} \pm 0.69$ \\
medium-expert & $106.05 \pm 0.23$ & $106.11 \pm 0.29$ & $\mathbf{106.88} \pm 0.24$ \\
expert        & $\mathbf{106.45} \pm 0.28$ & $106.09 \pm 0.23$ & $105.77 \pm 0.39$ \\
full-replay   & $82.65 \pm 0.50$ & $82.88 \pm 0.52$ & $\mathbf{84.32} \pm 0.30$ \\
\bottomrule
\end{tabular}
\vspace{1ex}
\caption{Effect of number of particles $M$ on $\DiffMPC$ for HalfCheetah datasets. Normalized scores (mean $\pm$ std) on D4RL. Best score per row is in bold.}
\label{tab:M_ablation_halfcheetah}
\end{table}

\begin{table}[!htbp]
\centering
\small
\setlength{\tabcolsep}{8pt}
\renewcommand{\arraystretch}{1.05}
\begin{tabular}{l|ccc}
\toprule
Dataset & $M=512$ & $M=1024$ & $M=4096$ \\
\midrule
random        & $8.61 \pm 0.83$   & $8.75 \pm 1.03$   & $\mathbf{8.77} \pm 0.94$ \\
medium-replay & $\mathbf{100.22} \pm 4.75$ & $99.14 \pm 10.27$ & $99.90 \pm 4.98$ \\
medium-expert & $110.15 \pm 7.17$ & $\mathbf{112.75} \pm 1.02$ & $112.74 \pm 1.28$ \\
full-replay   & $\mathbf{108.11} \pm 0.14$ & $108.05 \pm 0.16$ & $108.04 \pm 0.14$ \\
expert        & $100.47 \pm 16.44$ & $\mathbf{102.28} \pm 15.09$ & $101.30 \pm 17.34$ \\
\bottomrule
\end{tabular}
\vspace{1ex}
\caption{Effect of number of particles $M$ on $\DiffMPC$ for Hopper datasets. Normalized scores (mean $\pm$ std) on D4RL. Best score per row is in bold.}
\label{tab:M_ablation_hopper}
\end{table}

\begin{table}[!htbp]
\centering
\small
\setlength{\tabcolsep}{8pt}
\renewcommand{\arraystretch}{1.05}
\begin{tabular}{l|ccc}
\toprule
Dataset & $M=512$ & $M=1024$ & $M=4096$ \\
\midrule
random        & $\mathbf{21.97} \pm 0.01$  & $21.96 \pm 0.02$  & $21.96 \pm 0.02$ \\
medium        & $85.70 \pm 0.44$  & $83.67 \pm 5.29$  & $\mathbf{86.19} \pm 0.68$ \\
medium-replay & $77.21 \pm 14.71$ & $74.87 \pm 24.18$ & $\mathbf{79.78} \pm 17.78$ \\
medium-expert & $111.45 \pm 0.09$ & $111.46 \pm 0.09$ & $\mathbf{111.75} \pm 0.07$ \\
full-replay   & $\mathbf{103.81} \pm 0.80$ & $103.58 \pm 0.52$ & $103.74 \pm 0.61$ \\
expert        & $112.61 \pm 0.10$ & $112.68 \pm 0.07$ & $\mathbf{113.03} \pm 0.10$ \\
\bottomrule
\end{tabular}
\vspace{1ex}
\caption{Effect of number of particles $M$ on $\DiffMPC$ for Walker2d datasets. Normalized scores (mean $\pm$ std) on D4RL. Best score per row is in bold.}
\label{tab:M_ablation_walker2d}
\end{table}

\subsection{Pre-trained policy: SAC-RND}
\label{sec:sac-rnd}

We also evaluate $\DiffMPC$ using SAC-RND as the pre-trained policy. SAC-RND combines soft actor-critic with a random-network-distillation exploration bonus, where the prediction error of a learned network against a fixed random target network provides an intrinsic reward during policy training. We follow the hyperparameter choices from the original SAC-RND paper and apply our inference-time optimization starting from the resulting pre-trained SAC-RND policy. Table~\ref{tab:sacrnd_mujoco} reports the normalized D4RL score of SAC-RND and $\DiffMPC$, averaged over 10 evaluation episodes.

\begin{table}[!htbp]
\centering
\normalsize
\begin{tabular}{l|cc}
\toprule
Dataset & SAC-RND (zero-shot) & $\DiffMPC$ \\
\midrule
\multicolumn{3}{c}{halfcheetah} \\
\midrule
medium        & $62.90 \pm 0.31$  & $69.40 \pm 0.48$ \\
medium-replay & $47.78 \pm 1.16$  & $65.95 \pm 0.86$ \\
medium-expert & $105.99 \pm 0.17$ & $107.46 \pm 0.16$ \\
full-replay   & $82.03 \pm 0.42$ & $81.29 \pm 0.23$ \\
expert        & $46.96 \pm 48.83$ & $64.49 \pm 50.78$ \\
\midrule
\multicolumn{3}{c}{hopper} \\
\midrule
medium        & $99.70 \pm 0.28$  & $100.83 \pm 0.26$ \\
medium-replay & $100.51 \pm 1.16$ & $102.21 \pm 0.17$ \\
medium-expert & $110.53 \pm 0.17$ & $110.55 \pm 0.26$ \\
full-replay   & $77.41 \pm 17.96$ & $103.32 \pm 14.29$ \\
expert        & $110.64 \pm 0.31$ & $110.74 \pm 0.34$ \\
\midrule
\multicolumn{3}{c}{walker2d} \\
\midrule
medium        & $91.35 \pm 2.32$  & $92.74 \pm 0.83$ \\
medium-replay & $84.05 \pm 2.03$  & $89.85 \pm 1.66$ \\
medium-expert & $94.85 \pm 12.08$ & $108.56 \pm 0.12$ \\
full-replay   & $104.77 \pm 0.10$ & $105.50 \pm 0.13$ \\
expert        & $41.50 \pm 17.07$ & $82.03 \pm 31.84$ \\
\midrule
Average & $84.08$ & $\textbf{93.00}$ \\
\bottomrule
\end{tabular}
\vspace{2ex}
\caption{\normalsize Average normalized score over the final evaluation on ten episodes on Gym-MuJoCo tasks. The symbol $\pm$ represents the standard deviation across episodes.
\vspace{-5ex}}
\label{tab:sacrnd_mujoco}
\end{table}

\section{Additional Comparison with MPC approaches}
\label{sec:mpc_baselines}
\subsection{Latent-space world models for continuous control}
\label{app:latent-world-models}

The use of learned dynamics for policy improvement has a long history. PILCO~\citep{deisenroth2011pilco} fit Gaussian-process dynamics and computed analytic policy gradients via belief-state propagation. Stochastic Value Gradients~\citep{heess2015svg} extended this to neural dynamics by reparameterizing stochastic transitions and backpropagating through finite-horizon rollouts. PlaNet~\citep{hafner2019planet} introduced a recurrent state-space model and performed cross-entropy planning in the latent space. Two design axes emerge from this literature: \emph{what kind of latent dynamics to learn}, and \emph{how to use it to act}. Two families have come to dominate continuous control along these axes.

\paragraph{Dreamer family.} Dreamer~\citep{hafner2020dreamer} couples a recurrent latent dynamics model with an actor and critic trained on imagined latent rollouts; the actor is updated by backpropagating return gradients through the unrolled dynamics. DreamerV2~\citep{hafner2021dreamerv2} introduced categorical latents, and DreamerV3~\citep{hafner2025dreamerv3} demonstrated robust performance across DMControl, Atari, Crafter, and Minecraft under fixed hyperparameters. The defining feature is \emph{training-time use of the model}: the deployed actor is feedforward and the model is not consulted at inference.

\paragraph{TD-MPC family.} TD-MPC~\citep{hansen2022tdmpc} takes the opposite stance. It learns a Task-Oriented Latent Dynamics model jointly with reward, value, and a policy prior via TD learning, then performs gradient-free MPPI in the latent space at inference, with the learned policy as a warm-start. The terminal value bootstraps beyond the planning horizon. TD-MPC2~\citep{hansen2024tdmpc2} scales this to 104 continuous-control tasks across DMControl, Meta-World, ManiSkill2, and MyoSuite under a single hyperparameter setting. The defining feature is \emph{inference-time planning}: the model is consulted on every step, with substantial per-step compute (TD-MPC2's MPPI predicts ${\sim}9{,}200$ latent states per action) traded for state-conditioned decision making.

\paragraph{Contrast with our work.} Our setting differs along three axes, summarized in Table~\ref{tab:design-space}. First, \emph{inference-time parameter refinement}: Dreamer optimizes actor parameters at training time and deploys a static actor; TD-MPC optimizes action sequences at inference time but leaves the underlying policy unchanged. We optimize \emph{policy parameters} at inference time via gradient steps through the rollout, so the deployed policy itself adapts in response to the observed state. Second, \emph{offline regime}: both Dreamer and TD-MPC train via environment interaction with jointly learned dynamics, value, and policy components; we operate purely on a fixed offline dataset with a pretrained policy and critic. Third, \emph{generative state-space dynamics}: Dreamer's RSSM and TD-MPC's TOLD are deterministic predictors in compact latent spaces, trained jointly with reward and value; our transition model is a conditional generative model in explicit state space (diffusion, with a one-step MeanFlow variant in Section~\ref{sec:flowmpc}), trained in isolation by maximum-likelihood objectives. The Dreamer-vs-TD-MPC progression is itself evidence that inference-time use of a world model can outperform training-time-only use at the cost of additional deployment compute. We extend this design philosophy to offline RL and replace gradient-free MPPI over action sequences with gradient-based refinement of policy parameters.

\begin{table}[h]
\centering
\small
\begin{tabular}{llll}
\toprule
 & \textbf{Dreamer family} & \textbf{TD-MPC family} & \textbf{Ours} \\
\midrule
When is the model used?   & Training              & Inference            & Inference \\
What is optimized?        & Actor parameters      & Action sequences     & Policy parameters \\
How is it optimized?      & Gradient (BPTT)       & Gradient-free (MPPI) & Gradient (BPTT) \\
Setting                   & Online                & Online               & Offline \\
Dynamics                  & Latent (RSSM)         & Latent (TOLD)        & Explicit (generative) \\
Deployment cost           & Single forward pass   & MPPI search    & Inner-loop gradient updates \\
\bottomrule
\end{tabular}
\vspace{2ex}
\caption{Design-space comparison between the Dreamer family, the TD-MPC family, and our approach.}
\label{tab:design-space}
\end{table}

TD-MPC2 outperforms DreamerV3 on average across the 104-task suite, with the largest gains on hard high-dimensional locomotion and manipulation tasks. As TD-MPC2 represents the stronger of the two families and the closest analogue to our inference-time-planning approach, we use it as the representative latent-world-model baseline in our empirical comparison.

\paragraph{Empirical comparison with TD-MPC2.} We train TD-MPC2~\citep{hansen2024tdmpc2} on each D4RL Gym-MuJoCo dataset using the official implementation, with the offline transitions supplied as the replay buffer. We additionally train an offline-adapted variant of TD-MPC2 in which 
there is an added behavior-cloning (BC) loss on the policy: besides the usual TD-MPC2 objective, the actor is explicitly trained to match the dataset actions. This extra term pulls updates toward behavior-supported actions, reducing out-of-distribution policy drift during offline training and typically making learning more stable on fixed datasets. Table~\ref{tab:tdmpc2} reports both final and best normalized scores for the two variants. Neither variant approaches the performance of standard offline RL baselines (Section~\ref{sec:exp}): TD-MPC2 is designed for the online sample-efficient regime in which the replay buffer is grown by ongoing environment interaction, and removing this interaction, whether by simply repurposing the online recipe or by an offline adaptation, substantially degrades performance.

\begin{table}[h]
\centering
\renewcommand{\arraystretch}{1.3}
\setlength{\tabcolsep}{6pt}
\begin{tabular}{@{}l cc cc@{}}
\toprule
& \multicolumn{2}{c}{\textbf{TD-MPC2}} & \multicolumn{2}{c}{\textbf{TD-MPC2 (offline with BC loss)}} \\
\cmidrule(lr){2-3} \cmidrule(lr){4-5}
\textbf{Dataset} & \textbf{Final} & \textbf{Best} & \textbf{Final} & \textbf{Best} \\
\midrule
halfcheetah-random        & $4.13 \pm 1.17$  & $17.71 \pm 0.51$ & $12.75 \pm 1.93$ & $16.04 \pm 0.77$ \\
halfcheetah-medium        & $0.72 \pm 1.56$  & $2.60 \pm 0.17$  & $2.88 \pm 0.63$  & $3.34 \pm 0.74$  \\
halfcheetah-medium-replay & $7.01 \pm 0.64$  & $10.79 \pm 0.94$ & $6.41 \pm 0.15$  & $13.96 \pm 0.31$ \\
halfcheetah-medium-expert & $1.67 \pm 0.74$  & $2.78 \pm 0.38$  & $1.35 \pm 0.81$  & $3.32 \pm 0.72$  \\
halfcheetah-expert        & $-1.71 \pm 0.53$ & $-0.40 \pm 0.96$ & $-1.79 \pm 0.75$ & $0.32 \pm 0.22$  \\
halfcheetah-full-replay   & $4.56 \pm 0.58$  & $7.68 \pm 1.00$  & $6.25 \pm 0.51$  & $10.57 \pm 0.11$ \\
\midrule
hopper-random             & $3.66 \pm 1.48$  & $8.21 \pm 0.30$  & $1.35 \pm 0.48$  & $7.86 \pm 0.48$  \\
hopper-medium             & $1.32 \pm 0.41$  & $1.86 \pm 0.40$  & $1.51 \pm 0.18$  & $1.97 \pm 0.52$  \\
hopper-medium-replay      & $1.80 \pm 0.52$  & $3.81 \pm 0.47$  & $2.71 \pm 0.21$  & $5.17 \pm 1.38$  \\
hopper-medium-expert      & $0.78 \pm 0.06$  & $1.32 \pm 0.35$  & $0.84 \pm 0.05$  & $2.35 \pm 1.30$  \\
hopper-expert             & $0.72 \pm 0.02$  & $1.43 \pm 0.49$  & $0.73 \pm 0.08$  & $1.73 \pm 0.45$  \\
hopper-full-replay        & $4.18 \pm 2.35$  & $5.20 \pm 1.15$  & $3.77 \pm 1.99$  & $7.74 \pm 1.02$  \\
\midrule
walker2d-random           & $2.93 \pm 1.96$  & $9.59 \pm 4.01$  & $4.82 \pm 0.77$  & $7.50 \pm 1.37$  \\
walker2d-medium           & $0.55 \pm 0.30$  & $8.02 \pm 6.28$  & $0.28 \pm 0.29$  & $6.35 \pm 3.38$  \\
walker2d-medium-replay    & $4.52 \pm 1.89$  & $7.93 \pm 0.52$  & $2.26 \pm 0.75$  & $8.07 \pm 0.27$  \\
walker2d-medium-expert    & $-0.11 \pm 0.09$ & $0.58 \pm 0.23$  & $0.13 \pm 0.26$  & $2.42 \pm 0.51$  \\
walker2d-expert           & $-0.14 \pm 0.02$ & $0.03 \pm 0.18$  & $-0.17 \pm 0.06$ & $-0.09 \pm 0.03$ \\
walker2d-full-replay      & $4.02 \pm 1.72$  & $13.40 \pm 1.70$ & $5.20 \pm 4.15$  & $20.09 \pm 2.61$ \\
\bottomrule
\end{tabular}
\vspace{2ex}
\caption{TD-MPC2 normalized scores on D4RL Gym-MuJoCo. Mean $\pm$ standard deviation across three seed. Final score is the score at the end of training; best score is the highest score over the course of training.}
\label{tab:tdmpc2}
\end{table}
\subsection{Gradient-based and embedded differentiable MPC}
\label{app:gradient-mpc}

A complementary line of work uses gradient-based optimization through differentiable dynamics inside an MPC loop, either by backpropagating through learned rollouts or by embedding a differentiable MPC solver as a layer in a policy network. We discuss two representative methods.

\paragraph{Gradient-based planning with world models.} \citet{jyothir2023gradient} use gradient descent through a learned differentiable world model as an alternative to gradient-free trajectory optimization (CEM, MPPI) for visual model predictive control. At each environment step, they backpropagate the predicted return from an $H$-step rollout under the learned dynamics back to the action sequence and update the actions via gradient ascent. The dynamics is a Dreamer-style latent world model. They evaluate in the online, sample-efficient regime on visual control benchmarks and report performance on par with or superior to gradient-free MPC and pure policy-based methods. They also propose a hybrid variant in which a policy network produces an initial action sequence which is then refined by gradient-based MPC.

\paragraph{Actor-Critic Model Predictive Control.} \citet{romero2025acmpc} embed a differentiable MPC solver as the final layer of an actor network in an actor-critic RL framework. The MPC layer uses an \emph{analytic} dynamics model (a quadrotor model) together with a learned cost-map network that maps observations to the MPC's cost-function parameters. Gradients flow end-to-end through the MPC solver into the cost-map network during training, and at deployment the same MPC solver produces actions on every environment step. The method is evaluated on agile quadrotor racing in simulation and on a real platform, and is shown to improve out-of-distribution behavior, robustness to dynamics perturbations, and sample efficiency relative to standard actor-critic baselines.

\paragraph{Comparison with our work.} Our setting differs along three axes, summarized in Table~\ref{tab:gradient-mpc}. First, \emph{what is optimized}: \citet{jyothir2023gradient} optimize an open-loop action sequence at each step (or a policy-initialized refinement of one); \citet{romero2025acmpc} optimize the cost-map network parameters end-to-end through a differentiable MPC layer at training time, with the MPC solver itself producing actions at inference; we optimize the \emph{policy parameters} $\psi$ via gradient steps through the rollout at inference time. Second, \emph{dynamics class}: \citet{jyothir2023gradient} use a learned deterministic latent model, \citet{romero2025acmpc} use known analytic dynamics, and we use a learned conditional generative model in explicit state space (diffusion in our main results, single-step MeanFlow in Section~\ref{sec:flowmpc}). Third, \emph{problem setting}: both prior works operate online; we operate purely on a fixed offline dataset with a pretrained policy and critic, and the inference-time update is the only mechanism by which the deployed policy can improve.

\begin{table}[h]
\centering
\small
\renewcommand{\arraystretch}{1.4}
\begin{tabular}{@{}l >{\raggedright\arraybackslash}p{3.0cm} >{\raggedright\arraybackslash}p{3.6cm} >{\raggedright\arraybackslash}p{3.6cm}@{}}
\toprule
 & \textbf{Jyothir et al.} & \textbf{Romero et al.} & \textbf{Ours} \\
\midrule
What is optimized?   & Action sequences            & Cost-map parameters                  & Policy parameters \\
How is it optimized? & Gradient (BPTT)             & Gradient through MPC solver          & Gradient (BPTT) \\
When?                & Inference                   & Training and inference               & Inference \\
Dynamics             & Learned latent              & Analytic                             & Learned generative \\
Setting              & Online                      & Online                               & Offline \\
Domain               & Visual control              & Quadrotor flight                     & D4RL benchmarks \\
\bottomrule
\end{tabular}
\vspace{1ex}
\caption{Design-space comparison between gradient-based / embedded differentiable MPC methods and our approach.}
\label{tab:gradient-mpc}
\end{table}

\begin{table}[!h]
\centering
\normalsize
\begin{tabular}{l|cc}
\toprule
Dataset & Grad-MPC & $\DiffMPC$ \\
\midrule
\multicolumn{3}{c}{\textbf{halfcheetah}} \\
\midrule
medium        & $68.26 \pm 0.63$  & $\textbf{69.14} \pm 1.17$ \\
expert        & $107.81 \pm 0.58$ & $\textbf{107.82} \pm 0.60$ \\
medium-expert & $102.41 \pm 1.23$ & $\textbf{106.28} \pm 1.63$ \\
medium-replay & $51.32 \pm 0.60$  & $\textbf{63.45} \pm 0.59$ \\
full-replay   & $82.37 \pm 0.28$  & $\textbf{85.69} \pm 0.24$ \\
\midrule
\multicolumn{3}{c}{\textbf{hopper}} \\
\midrule
medium        & $102.33 \pm 0.25$ & $\textbf{103.39} \pm 0.08$ \\
expert        & $110.20 \pm 6.08$ & $\textbf{113.06} \pm 2.03$ \\
medium-expert & $\textbf{112.87} \pm 1.62$ & $110.68 \pm 17.30$ \\
medium-replay & $101.27 \pm 0.34$ & $\textbf{103.17} \pm 0.15$ \\
full-replay   & $107.13 \pm 0.14$ & $\textbf{108.26} \pm 0.10$ \\
\midrule
\multicolumn{3}{c}{\textbf{walker2d}} \\
\midrule
medium        & $86.63 \pm 0.96$  & $\textbf{88.99} \pm 0.60$ \\
expert        & $112.54 \pm 0.12$ & $\textbf{116.53} \pm 0.16$ \\
medium-expert & $112.20 \pm 0.05$ & $\textbf{117.32} \pm 0.73$ \\
medium-replay & $78.72 \pm 19.90$ & $\textbf{94.49} \pm 0.86$ \\
full-replay   & $104.37 \pm 0.93$ & $\textbf{106.69} \pm 0.75$ \\
\midrule
Average & $94.30$ & $\textbf{99.66}$ \\
\bottomrule
\end{tabular}
\vspace{2ex}
\caption{\normalsize Average normalized score over the final evaluation and ten episodes on Gym-MuJoCo tasks. The symbol $\pm$ represents the standard deviation across episodes.
\vspace{-1ex}}
\label{tab:action_update_mujoco}
\end{table}

\paragraph{Empirical comparison.} Among the two prior methods, only \citet{jyothir2023gradient} is directly applicable to our setting: \citet{romero2025acmpc} requires a known analytic dynamics model, which is unavailable on D4RL benchmarks where we only have access to an offline dataset. We therefore compare against an offline adaptation of \citet{jyothir2023gradient}, which we refer to as {Grad-MPC}: at each environment step, we backpropagate the imagined return through our learned diffusion world model to update an open-loop action sequence via gradient ascent, keeping all other components (world model, critic, horizon, number of inner steps) identical to $\DiffMPC$. Table~\ref{tab:action_update_mujoco} reports the resulting normalized scores. $\DiffMPC$ outperforms Action Update on 14 of 15 datasets and improves the average normalized score from $94.30$ to $99.66$. The gap is most pronounced on datasets with broad coverage and on Walker2d (e.g., walker2d-medium-expert improves from $112.20$ to $117.32$, halfcheetah-medium-replay from $51.32$ to $63.45$). We attribute this gap to the difference in what is being optimized: updating policy parameters $\psi$ amortizes the inference-time gradient signal across states and exploits the inductive bias of the pretrained policy, whereas optimizing a fresh open-loop action sequence at every step discards this structure and is more vulnerable to compounding world-model errors over the planning horizon.

\subsection{Diffusion-based MPC}
\label{app:dmpc}

A closely related thread uses diffusion models for MPC, but with a fundamentally different role for the generative components. We discuss the most prominent recent example.

\paragraph{Diffusion Model Predictive Control.} \citet{zhou2024dmpc} propose D-MPC, an online MPC method built on two diffusion models trained on offline data: a multi-step \emph{action proposal} that generates candidate action sequences over a horizon $F$, and a multi-step \emph{dynamics model} that predicts the corresponding state trajectories. At each environment step, candidate action sequences are sampled from the action proposal, scored by predicted return under the dynamics model, and the best sequence is selected via a sampling-based planner (a variant of random shooting). Only the first action is executed before the procedure repeats. They evaluate on D4RL Gym-MuJoCo and report substantial gains over single-step model-based planning baselines such as MBOP, attributing the improvement to multi-step trajectory-level modeling that mitigates compounding error.

\paragraph{Contrast with our work.} D-MPC and our method both use diffusion models inside an MPC loop, but the role of the diffusion model and the optimization mechanism differ along the axes summarized in Table~\ref{tab:dmpc-comparison}. First, \emph{what is optimized}: D-MPC selects the best \emph{action sequence} from candidates sampled by a learned diffusion action proposal; we update the \emph{policy parameters} $\psi$ of a pretrained behavior policy via gradient steps through the rollout. Second, \emph{how it is optimized}: D-MPC is gradient-free, scoring sampled trajectories under the dynamics model; we backpropagate through the entire rollout to obtain gradients with respect to $\psi$. Third, \emph{dynamics structure}: D-MPC's diffusion model is multi-step and trajectory-level, predicting $F$ future states in a single forward pass; ours is single-step ($s_{t+1}$ given $s_t, a_t$), with the rollout assembled by repeated application. Fourth, \emph{generative components}: D-MPC trains two diffusion models (action proposal and dynamics); we train a single generative dynamics model and rely on a separately trained behavior policy for actions. For ease of reference, we reproduce the empirical comparison against D-MPC on D4RL datasets from Section~\ref{sec:exp} in Table~\ref{tab:dmpc_locomotion_app}.

\begin{table}[h]
\centering
\small
\renewcommand{\arraystretch}{1.4}
\begin{tabular}{@{}l >{\raggedright\arraybackslash}p{4.5cm} >{\raggedright\arraybackslash}p{5.0cm}@{}}
\toprule
 & \textbf{Zhou et al.} & \textbf{Ours} \\
\midrule
What is optimized?      & Action sequence (best candidate) & Policy parameters \\
How is it optimized?    & Gradient-free (random shooting)  & Gradient (BPTT) \\
Dynamics                & Multi-step trajectory diffusion  & One-step transition diffusion \\
Action source           & Learned diffusion proposal       & Pretrained behavior policy \\
\bottomrule
\end{tabular}
\vspace{2ex}
\caption{Design-space comparison between D-MPC and our approach.}
\label{tab:dmpc-comparison}
\end{table}

\begin{table}[!htbp]
\centering
\renewcommand{\arraystretch}{1.05}
\begin{tabular}{l|cc}
\toprule
Task Name & D-MPC & $\DiffMPC$ \\
\midrule
halfcheetah medium        & $46.00 \pm 0.17$ & $\textbf{70.05} \pm 1.8$ \\
hopper medium             & $61.24 \pm 2.30$ & $\textbf{103.38} \pm 0.35$ \\
walker2d medium           & $76.21 \pm 2.67$ & $\textbf{88.91} \pm 0.6$ \\
halfcheetah medium-replay & $41.12 \pm 0.31$ & $\textbf{59.89} \pm 1.2$ \\
hopper medium-replay      & $92.49 \pm 2.23$ & $\textbf{103.14} \pm 0.45$ \\
walker2d medium-replay    & $78.81 \pm 4.19$ & $\textbf{95.87} \pm 1.19$ \\
\midrule
Average & $65.98$ & $\textbf{86.87}$ \\
\bottomrule
\end{tabular}
\caption{Average normalized score on ten unseen seeds. $\pm$ represents the standard deviation across seeds.}
\label{tab:dmpc_locomotion_app}
\end{table}

\section{FlowMPC: One step Model}
\label{sec:flowmpc_app}
Inference-time policy optimization can improve performance over purely offline-trained policies, but it brings this issue of deployment speed to the forefront. The inner loop requires many model evaluations per environment step, and diffusion-based transition models are especially costly because each simulated transition requires iterative denoising \citep{ho2020denoising}. As a result, iterative sampling becomes a dominant latency bottleneck in real-time control settings. There is a strong need for faster generative dynamics for differentiable planning.

One way to reduce this deployment cost is to replace iterative diffusion sampling with a \emph{single-step} generative model. Mean-flow \citep{geng2025meanflows} dynamics provide such a mechanism: the model maps a state-action pair to the next-state prediction in one forward pass, cutting per-transition latency at inference. This is especially attractive in inference-time MPC loops, where each environment step may require many simulated transitions and wall-clock time scales directly with the number of denoising steps. The main drawback is statistical: single-step models are typically less expressive than multi-step diffusion samplers and can incur higher prediction error on complex, multimodal transition distributions, and under rollout these errors compound. To  mitigate this, we  bias world-model training toward state-action regions that are likely under a strong pretrained policy, so that capacity is spent where the deployed controller will actually query the model. 

\subsection{Mean-Flow Models}
Mean-flow models \citep{geng2025meanflows} provide a framework for \emph{one-step} generation by modeling an \emph{average} velocity field, rather than the \emph{instantaneous} velocity used in standard Flow Matching \citep{lipman2022flowmatching}.
We follow the common continuous-time transport view, where a latent variable $z_t\in\mathbb{R}^d$ evolves over $t\in[0,1]$ to map between a data distribution and a simple prior.
Let $x\sim p_{\mathrm{data}}$ denote a data sample and $\epsilon\sim p_{\mathrm{prior}}$ denote a prior sample (typically $\mathcal{N}(0,I)$).
A simple probability path used in rectified-flow style constructions \citep{liu2022rectifiedflow} is the linear interpolation
\begin{align*}
z_t \;:=\; (1-t)\,x + t\,\epsilon.
\end{align*}
Along such paths, Flow Matching learns an instantaneous velocity field $v(z_t,t)$ that defines an ODE
\begin{align*}
\frac{d}{dt} z_t \;=\; v(z_t,t),
\end{align*}
so that generation can be performed by numerically solving the ODE from $t=1$ to $t=0$ starting at $z_1=\epsilon$.

\paragraph{Average velocity.}
Mean-flow models instead define the \emph{average velocity} between two time points $r<t$ as \citep{geng2025meanflows}
\begin{align*}
u(z_t,r,t) \;:=\; \frac{1}{t-r}\int_{r}^{t} v(z_{\tau},\tau)\,d\tau.
\end{align*}
This quantity is aligned with the displacement along the trajectory: $z_r \;=\; z_t - (t-r)\,u(z_t,r,t)$.

If a neural network $u_{\theta}$ accurately predicts $u(\cdot,r,t)$, then sampling can avoid numerical integration.
In particular, \emph{one-step} sampling is obtained by drawing $z_1=\epsilon\sim p_{\mathrm{prior}}$ and setting
\begin{align*}
z_0 \;=\; z_1 - u_{\theta}(z_1,0,1).
\end{align*}

\paragraph{MeanFlow Identity and Training Objective.}
Direct supervision of $u(z_t,r,t)$ is intractable because it is defined by a time integral.
\citet{geng2025meanflows} derive an identity relating average and instantaneous velocities:
\begin{align*}
u(z_t,r,t)
\;=\;
v(z_t,t) - (t-r)\,\frac{d}{dt}u(z_t,r,t),
\end{align*}
where $\frac{d}{dt}$ is the total derivative along the trajectory.
Expanding the total derivative gives
\begin{align*}
\frac{d}{dt}u(z_t,r,t)
\;=\;
\big\langle \nabla_{z}u(z_t,r,t),\, v(z_t,t)\big\rangle + \partial_t u(z_t,r,t).
\end{align*}
This yields a tractable training target by replacing $v(z_t,t)$ with the usual conditional velocity used in Flow Matching.
For the linear path above, a standard conditional velocity is $v_{\mathrm{cond}}=\epsilon-x$ \citep{lipman2022flowmatching,liu2022rectifiedflow}.
MeanFlow training fits $u_{\theta}$ by regressing to the identity-based target:
\begin{align*}
\mathcal{L}_{\mathrm{MeanFlow}}(\theta)
=
\mathbb{E}\Big[
\big\|
u_{\theta}(z_t,r,t)
-
\mathrm{stopgrad}\big(
v_{\mathrm{cond}} - (t-r)\,\tfrac{d}{dt}u_{\theta}(z_t,r,t)
\big)
\big\|_2^2
\Big],
\end{align*}
with expectations over $x\sim p_{\mathrm{data}}$, $\epsilon\sim p_{\mathrm{prior}}$, and a sampling scheme for $(r,t)$ with $0\le r<t\le 1$.
In our world-model setting, $x$ corresponds to the next state (or state increment), and both $u_{\theta}$ and the conditional velocity are parameterized \emph{conditionally} on the current state--action pair.

\subsection{One-Step Mean-Flow Dynamics for World Modeling}
We learn a conditional, differentiable transition sampler for next-state generation. Given $(s_t,a_t)$, we generate $s_{t+1}$ via a reparameterized map
\begin{align*}
s_{t+1} \;=\; f_{\theta}(s_t,a_t,\epsilon_t),
\qquad
\epsilon_t \sim p_0(\epsilon),
\end{align*}
so that rollout sampling is a single forward pass and gradients with respect to $(s_t,a_t)$ are available.

\paragraph{MeanFlow sampler.}
We instantiate $f_{\theta}$ using a one-step MeanFlow model \citep{geng2025meanflows}. The model learns an \emph{average-velocity} field $u_{\theta}(\cdot,r,t\,;\,s,a)$ and uses the MeanFlow sampling $z_r = z_t - (t-r)\,u(z_t,r,t)$. In the one-step case with $(r,t)=(0,1)$ and $z_1=\epsilon$, generation is $z_0=\epsilon-u_{\theta}(\epsilon,0,1)$. We therefore define the conditional transition sampler as
\begin{align*}
f_{\theta}(s_t,a_t,\epsilon_t)
\;:=\;
\epsilon_t \;-\; u_{\theta}(\epsilon_t,0,1\,;\,s_t,a_t).
\end{align*}

\paragraph{Training from offline transitions.}
We train on $\mathcal{D}=\{(s_t,a_t,r_t,s_{t+1})\}$ by treating $x=s_{t+1}$ as the data sample. We draw $\epsilon\sim p_0$ and times $0\le r<t\le 1$, and form the linear interpolation $z=(1-t)x+t\epsilon$. For this path, the conditional velocity is $v=\epsilon-x$ \citep{lipman2022flowmatching}. MeanFlow fits $u_{\theta}$ by enforcing the MeanFlow identity \citep{geng2025meanflows} through the regression target
\[
u_{\mathrm{tgt}} \;=\; v \;-\; (t-r)\,\frac{d}{dt}u_{\theta}(z,r,t\,;\,s_t,a_t),
\]
where the total derivative along the path is $\frac{d}{dt}u_{\theta} = v\,\partial_z u_{\theta} + \partial_t u_{\theta}$ (implemented as a Jacobian-vector product). The training loss is
\begin{align*}
\mathcal{L}_{\mathrm{MF}}(\theta)
\;=\;
\mathbb{E}\Big[
\big\|
u_{\theta}(z,r,t\,;\,s_t,a_t)
-
\mathrm{sg}\!\big(u_{\mathrm{tgt}}\big)
\big\|_2^2
\Big],
\end{align*}
where $\mathrm{sg}(\cdot)$ denotes stop-gradient.

\subsection{Policy-Tilted Flow World Models}

A one-step world model trades compute for statistical error: it is fast to sample, but rollout quality depends on where the model is accurate. Since inference-time planning queries transitions induced by the deployed controller, we train the world model under a \emph{tilted} distribution that emphasizes state--action regions likely under a strong pretrained policy $\pi_{\psi_{\mathrm{pre}}}$.

\paragraph{Tilted training objective.}
Let $\ell_{\theta}(s,a,s')$ denote the per-transition MeanFlow loss from the previous subsection (the contribution inside $\mathcal{L}_{\mathrm{MF}}(\theta)$) for a transition $(s,a,s')\in\mathcal{D}$. We train the transition model by minimizing the weighted risk
\begin{align*}
\mathcal{L}_{\mathrm{tilt}}(\theta)
\;=\;
\mathbb{E}_{(s,a,s')\sim \mathcal{D}}
\Big[
\bar{w}_{\eta}(s,a)\,\ell_{\theta}(s,a,s')
\Big],
\end{align*}
where the normalized tilt weight is
\begin{align*}
\bar{w}_{\eta}(s,a)
\;:=\;
\frac{\exp\!\big(\eta \,\log \pi_{\psi_{\mathrm{pre}}}(a\mid s)\big)}
{\mathbb{E}_{(s,a)\sim \mathcal{D}}\!\left[\exp\!\big(\eta \,\log \pi_{\psi_{\mathrm{pre}}}(a\mid s)\big)\right]},
\end{align*}
and $\eta\ge 0$ controls the tilt strength (with $\eta=0$ recovering uniform training). In practice, the denominator is estimated by the minibatch mean so that the average weight in each minibatch is $1$.

The weight $\bar{w}_{\eta}(s,a)$ increases for dataset transitions whose actions have higher likelihood under $\pi_{\psi_{\mathrm{pre}}}(\cdot\mid s)$. This concentrates model capacity on state--action regions that dominate imagined rollouts at deployment, improving rollout fidelity in the MPC loop and mitigating the statistical error introduced by using a one-step MeanFlow transition model.

\subsection{Experiments}

\begin{table*}[t]
\centering
\small
\begin{tabular}{l|cc}
\toprule
Task Name & Zero Shot & $\FlowMPC$ \\
\midrule
halfcheetah-medium        & $65.42 \pm 1.76$  & $\textbf{70.06} \pm 1.9$ \\
halfcheetah-expert        & $105.33 \pm 3.21$ & $\textbf{106.96} \pm 1.486$ \\
halfcheetah-medium-expert & $103.79 \pm 2.8$  & $\textbf{109.89} \pm 0.89$ \\
halfcheetah-medium-replay & $50.78 \pm 0.56$  & $\textbf{56.51} \pm 5.5$ \\
halfcheetah-full-replay   & $82.38 \pm 1.32$  & $\textbf{85.89} \pm 0.95$ \\
\midrule
hopper-medium             & $102.25 \pm 0.23$ & $\textbf{103.07} \pm 0.65$ \\
hopper-expert             & $100.74 \pm 14.9$ & $\textbf{101.21} \pm 17.2$ \\
hopper-medium-expert      & $107.48 \pm 4.5$  & $\textbf{107.78} \pm 3.06$ \\
hopper-medium-replay      & $98.65 \pm 5.15$  & $\textbf{102.60} \pm 0.59$ \\
hopper-full-replay        & $107.49 \pm 0.74$ & $\textbf{108.36} \pm 0.55$ \\
\midrule
walker2d-medium           & $84.79 \pm 1.52$  & $\textbf{88.59} \pm 1.2$ \\
walker2d-expert           & $112.19 \pm 0.31$ & $\textbf{116.44} \pm 0.42$ \\
walker2d-medium-expert    & $111.65 \pm 0.46$ & $\textbf{115.01} \pm 1.09$ \\
walker2d-medium-replay    & $83.86 \pm 7.38$  & $\textbf{90.92} \pm 4.2$ \\
walker2d-full-replay      & $101.11 \pm 3.9$  & $\textbf{104.76} \pm 1.12$ \\
\midrule
Average & $94.93$ & $\textbf{97.87}$ \\
\bottomrule
\end{tabular}
\caption{Average normalized score over the final evaluation and ten unseen training seeds on Gym-MuJoCo tasks, excluding random datasets. The symbol $\pm$ represents the standard deviation across the seeds.}
\label{tab:rebrac_mujoco_meanflow}
\end{table*}

\paragraph{Benchmarks and evaluation.}
We evaluate on the D4RL Gym-MuJoCo locomotion suite \citep{fu2020d4rl} across HalfCheetah, Hopper, and Walker2d, using the standard v2 datasets in five regimes (medium, expert, medium-expert, medium-replay, full-replay), for a total of $15$ tasks. We report D4RL normalized scores computed from episode returns. Each reported value is the mean and standard deviation over $10$ evaluation seeds. 

\paragraph{Methods compared.}
ReBRAC is the offline-trained initialization policy \citep{tarasov2023minimalist} and is the zero shot performance. {$\FlowMPC$} is our method, which replaces diffusion transition sampling with a one-step MeanFlow transition model and trains the world model with policy-tilted weighting.

\paragraph{Inference-time policy improvement.}
Both $\DiffMPC$ and $\FlowMPC$ refine the deployed controller at test time. At each environment step, they generate imagined forward rollouts under a learned differentiable transition model, score trajectories using learned rewards and a terminal critic term, and take a small number of gradient steps through the rollout computation graph before executing the resulting action. The key difference is the transition sampler: $\DiffMPC$ uses a conditional diffusion model with $K$ reverse denoising steps per transition, while $\FlowMPC$ uses a single-step MeanFlow sampler.

\paragraph{Main results on D4RL.}
Table~\ref{tab:rebrac_mujoco_meanflow} summarizes performance on all tasks. $\FlowMPC$ improves over the ReBRAC initialization on average (83.91 vs.\ 81.14) and matches or exceeds MPCwDWM on several HalfCheetah regimes. MPCwDWM attains the best average score (85.18), but it relies on iterative diffusion sampling at inference time which is far more expensive.

\begin{table}[t]
\centering
\small
\begin{tabular}{c|c|c|c}
\toprule
Diffusion Steps & Diffusion (ms) & MeanFlow (ms) & Speedup \\
\midrule
\multicolumn{4}{c}{\textbf{HalfCheetah}} \\
\midrule
10 & $1.0200 \pm 0.0019$ & $0.1177 \pm 0.0017$ & $8.67$x \\
20 & $2.0184 \pm 0.0019$ & $0.1177 \pm 0.0017$ & $17.15$x \\
30 & $3.0852 \pm 0.0008$ & $0.1177 \pm 0.0017$ & $26.21$x \\
40 & $4.0280 \pm 0.0011$ & $0.1177 \pm 0.0017$ & $34.22$x \\
50 & $5.0487 \pm 0.0072$ & $0.1177 \pm 0.0017$ & $42.89$x \\
\midrule
\multicolumn{4}{c}{\textbf{Hopper}} \\
\midrule
10 & $1.0211 \pm 0.0017$ & $0.1213 \pm 0.0018$ & $8.42$x \\
20 & $2.0214 \pm 0.0017$ & $0.1213 \pm 0.0018$ & $16.66$x \\
30 & $3.0788 \pm 0.0010$ & $0.1213 \pm 0.0018$ & $25.38$x \\
40 & $4.0237 \pm 0.0038$ & $0.1213 \pm 0.0018$ & $33.17$x \\
50 & $5.0508 \pm 0.0017$ & $0.1213 \pm 0.0018$ & $41.64$x \\
\midrule
\multicolumn{4}{c}{\textbf{Walker2d}} \\
\midrule
10 & $1.0252 \pm 0.0017$ & $0.1182 \pm 0.0019$ & $8.67$x \\
20 & $2.0225 \pm 0.0018$ & $0.1182 \pm 0.0019$ & $17.11$x \\
30 & $3.0796 \pm 0.0016$ & $0.1182 \pm 0.0019$ & $26.05$x \\
40 & $4.0387 \pm 0.0027$ & $0.1182 \pm 0.0019$ & $34.17$x \\
50 & $5.0597 \pm 0.0007$ & $0.1182 \pm 0.0019$ & $42.81$x \\
\bottomrule
\end{tabular}
\caption{Forward-pass latency comparison between diffusion and MeanFlow on D4RL random datasets (5 runs per measurement). Horizon $H=1$, batch size $N=2048$. Action dimension: HalfCheetah $=6$, Hopper $=3$, Walker2d $=6$. MeanFlow uses a single step while diffusion uses the number of denoising steps shown. Values report mean $\pm$ standard deviation in milliseconds.}
\label{tab:latency_forward}
\end{table}

\begin{table}[t]
\centering
\small
\begin{tabular}{c|c|c|c}
\toprule
Diffusion Steps & Diffusion Total (ms) & MeanFlow Total (ms) & Speedup \\
\midrule
\multicolumn{4}{c}{\textbf{HalfCheetah}} \\
\midrule
10 & $3.2964 \pm 0.0011$ & $0.2184 \pm 0.0010$ & $15.09$x \\
20 & $6.5419 \pm 0.0019$ & $0.2184 \pm 0.0010$ & $29.95$x \\
30 & $9.8108 \pm 0.0016$ & $0.2184 \pm 0.0010$ & $44.92$x \\
40 & $13.0628 \pm 0.0009$ & $0.2184 \pm 0.0010$ & $59.81$x \\
50 & $16.3602 \pm 0.0011$ & $0.2184 \pm 0.0010$ & $74.91$x \\
\midrule
\multicolumn{4}{c}{\textbf{Hopper}} \\
\midrule
10 & $3.3223 \pm 0.0012$ & $0.2261 \pm 0.0012$ & $14.69$x \\
20 & $6.5865 \pm 0.0003$ & $0.2261 \pm 0.0012$ & $29.13$x \\
30 & $9.8816 \pm 0.0019$ & $0.2261 \pm 0.0012$ & $43.70$x \\
40 & $13.1673 \pm 0.0006$ & $0.2261 \pm 0.0012$ & $58.24$x \\
50 & $16.4745 \pm 0.0021$ & $0.2261 \pm 0.0012$ & $72.86$x \\
\midrule
\multicolumn{4}{c}{\textbf{Walker2d}} \\
\midrule
10 & $3.2956 \pm 0.0014$ & $0.2182 \pm 0.0008$ & $15.10$x \\
20 & $6.5398 \pm 0.0003$ & $0.2182 \pm 0.0008$ & $29.97$x \\
30 & $9.8225 \pm 0.0022$ & $0.2182 \pm 0.0008$ & $45.02$x \\
40 & $13.0738 \pm 0.0014$ & $0.2182 \pm 0.0008$ & $59.92$x \\
50 & $16.3534 \pm 0.0007$ & $0.2182 \pm 0.0008$ & $74.95$x \\
\bottomrule
\end{tabular}
\caption{Total gradient backpropagation time comparison between diffusion and MeanFlow on D4RL random datasets (5 runs). Horizon $H=1$, batch size $N=2048$. Action dimension: HalfCheetah $=6$, Hopper $=3$, Walker2d $=6$. MeanFlow uses one step while diffusion uses the number of denoising steps shown. Values report mean $\pm$ standard deviation in milliseconds.}
\label{tab:latency_total}
\end{table}

\paragraph{Inference-time compute and latency.}
Tables~\ref{tab:latency_forward} and \ref{tab:latency_total} measure the cost of a single transition evaluation and the corresponding forward+backward cost used in inference-time optimization.
Across environments, one-step MeanFlow reduces the forward-pass cost by about $8\times$ relative to a $K=10$ diffusion sampler, with the gap widening linearly as the number of diffusion denoising steps increases.
The same trend holds for end-to-end differentiation: MeanFlow reduces total (forward+backward) time by roughly $15\times$ at $K=10$ and by more than $70\times$ at $K=50$.
These measurements quantify the bottleneck addressed by $\FlowMPC$: rollout generation and differentiation dominate deployment-time compute, and replacing iterative denoising with a one-step sampler yields large, predictable latency reductions.

\clearpage
\section{Broader impacts}
\label{sec:impact}
This work is foundational methods research on offline reinforcement learning, evaluated on standard simulated continuous-control benchmarks (D4RL MuJoCo and AntMaze) rather than on a deployed system. On the positive side, offline RL with inference-time adaptation is motivated by settings where online interaction is costly or unsafe (e.g., healthcare, robotics, autonomous driving, recommender systems), and improving the reliability of policies extracted from fixed datasets can reduce the need for risky exploration in such domains. On the negative side, the same techniques could in principle be applied to high-stakes decision-making systems where errors have real consequences; inference-time policy updates also make the deployed policy state-dependent and harder to audit than a fixed pretrained policy, which could complicate verification in safety-critical applications. We do not release any new datasets or pretrained models that pose elevated misuse risk beyond what is already available through D4RL and standard offline RL baselines, and we view careful domain-specific validation, uncertainty quantification, and human oversight as appropriate mitigations before any real-world deployment.

\end{document}